\icmltitlerunning{Reinforcement Learning with Deep Energy-Based Policies}
\newtheorem{theorem}{Theorem}
\newcommand{\eg}{e.g.\ }
\newcommand{\E}[2]{\operatorname{\mathbb{E}}_{#1}\left[#2\right]}
\newcommand{\EEE}{\mathbb{E}}
\newcommand{\density}{p}
\newcommand{\kl}[2]{\mathrm{D_{KL}}\left(#1\;\middle\|\;#2\right)}
\newcommand{\entropy}{\mathcal{H}}
\newcommand{\ent}{\mathcal{H}}
\newcommand{\sdots}{\,\cdot\,}
\newcommand{\eye}{\boldsymbol{I}}
\newcommand{\zeros}{\boldsymbol{0}}
\newcommand{\sspace}{\mathcal{S}}
\newcommand{\aspace}{\mathcal{A}}
\newcommand{\state}{\mathbf{s}}
\newcommand{\st}{{\state_t}}
\newcommand{\sti}{{\state_t^{(i)}}}
\newcommand{\stp}{{\state_{t+1}}}
\newcommand{\stpi}{{\state_{t+1}^{(i)}}}
\newcommand{\pdyn}{\density_\state}
\newcommand{\action}{\mathbf{a}}
\newcommand{\at}{{\action_t}}
\newcommand{\ati}{{\action_t^{(i)}}}
\newcommand{\attildej}{{\tilde{\action}_t^{(j)}}}
\newcommand{\atij}{{\action_t^{(i,j)}}}
\newcommand{\opt}{^*}
\newcommand{\reward}{r}
\newcommand{\rz}{\reward_0}
\newcommand{\rt}{\reward_t}
\newcommand{\rti}{\reward^{(i)}_t}
\newcommand{\rmin}{r_\mathrm{min}}
\newcommand{\rmax}{r_\mathrm{max}}
\newcommand{\Vsoft}{V_\mathrm{soft}}
\newcommand{\Vsoftparams}{V_\mathrm{soft}^\qparams}
\newcommand{\Q}{Q}
\newcommand{\Qsoft}{Q_\mathrm{soft}}
\newcommand{\Qsoftparams}{Q_\mathrm{soft}^\qparams}
\newcommand{\Qhatsoft}{\hat Q_\mathrm{soft}}
\newcommand{\Qhatsoftparams}{\hat Q_\mathrm{soft}^{\bar\qparams}}
\newcommand{\Ahatsoft}{\hat A_\mathrm{soft}}
\newcommand{\energy}{\mathcal{E}}
\newcommand{\policy}{\pi}
\newcommand{\params}{\theta}
\newcommand{\pparams}{{\phi}}   
\newcommand{\qparams}{{\theta}}   
\newcommand{\qtargetparams}{{\bar\theta}}   
\newcommand{\pgparams}{\phi}
\newcommand{\pgloss}{J_\mathrm{PG}}
\newcommand{\qloss}{J_\Q}  
\newcommand{\ploss}{J_\policy}  
\newcommand{\temp}{\alpha}  
\newcommand{\kernel}{\kappa}
\newcommand{\smpl}{\mathbf{\xi}}  
\newcommand{\smpli}{\smpl^{(i)}}  
\newcommand{\smpltildej}{\tilde{\xi}^{(j)}}  
\newcommand{\smplij}{\xi^{(i,j)}}  
\newcommand{\gauss}{\mathcal{N}}
\newcommand{\discount}{\gamma}
\newcommand{\aref}[1]{\hyperref[#1]{Appendix~\ref{#1}}}
\def\alignautorefname~#1\null{%
  (#1)\null
}\def\equationautorefname~#1\null{%
  (#1)\null
}
\begin{document} 
\twocolumn[
\icmltitle{Reinforcement Learning with Deep Energy-Based Policies}

\icmlsetsymbol{equal}{*}

\begin{icmlauthorlist}
\icmlauthor{Tuomas Haarnoja}{equal,ucb}
\icmlauthor{Haoran Tang}{equal,ucbmath}
\icmlauthor{Pieter Abbeel}{ucb,oai,icsi}
\icmlauthor{Sergey Levine}{ucb}
\end{icmlauthorlist}

\icmlaffiliation{ucb}{UC Berkeley, Department of Electrical Engineering and Computer Sciences}
\icmlaffiliation{ucbmath}{UC Berkeley, Department of Mathematics}
\icmlaffiliation{oai}{OpenAI}
\icmlaffiliation{icsi}{International Computer Science Institute}

\icmlcorrespondingauthor{Haoran Tang}{hrtang@math.berkeley.edu}
\icmlcorrespondingauthor{Tuomas Haarnoja}{haarnoja@berkeley.edu}

\icmlkeywords{reinforcement learning, deep learning, maximum entropy, energy-based models}

\vskip 0.3in
]

\printAffiliationsAndNotice{\icmlEqualContribution} %

\begin{abstract} 
We propose a method for learning expressive energy-based policies for continuous states and actions, which has been feasible only in tabular domains before. We apply our method to learning maximum entropy policies, resulting into a new algorithm, called soft Q-learning, that expresses the optimal policy via a Boltzmann distribution. We use the recently proposed amortized Stein variational gradient descent to learn a stochastic sampling network that approximates samples from this distribution. The benefits of the proposed algorithm include improved exploration and compositionality that allows transferring skills between tasks, which we confirm in simulated experiments with swimming and walking robots. We also draw a connection to actor-critic methods, which can be viewed performing approximate inference on the corresponding energy-based model.
\end{abstract}

\vspace{-8mm}
\section{Introduction}
\label{sec:introduction}
\vspace{-1mm}

Deep reinforcement learning (deep RL) has emerged as a promising direction for autonomous acquisition of complex behaviors \cite{mnih2015human,silver2016mastering}, due to its ability to process complex sensory input \cite{jaderberg2016reinforcement} and to acquire elaborate behavior skills using general-purpose neural network representations \cite{levine2016end}. Deep reinforcement learning methods can be used to optimize deterministic \cite{lillicrap2015continuous} and stochastic \cite{schulman2015trust, mnih2016asynchronous} policies. However, most deep RL methods operate on the conventional deterministic notion of optimality, where the optimal solution, at least under full observability, is always a deterministic policy \cite{sutton1998reinforcement}. Although stochastic policies are desirable for exploration, this exploration is typically attained heuristically, for example by injecting noise \cite{silver2014deterministic, lillicrap2015continuous, mnih2015human} or initializing a stochastic policy with high entropy \cite{kakade2002natural, schulman2015trust, mnih2016asynchronous}.

In some cases, we might actually prefer to learn stochastic behaviors. In this paper, we explore two potential reasons for this: exploration in the presence of multimodal objectives, and compositionality 
attained via pretraining. Other benefits include robustness in the face of uncertain dynamics \cite{ziebart2010modeling}, imitation learning \cite{ziebart2008maximum}, and improved convergence and computational properties \cite{gu2016q}. Multi-modality also has application in real robot tasks, as demonstrated in \cite{daniel2012hierarchical}. However, in order to learn such policies, we must define an objective that promotes stochasticity. 

In which cases is a stochastic policy actually the optimal solution? As discussed in prior work, a stochastic policy emerges as the optimal answer when we consider the connection between optimal control and probabilistic inference \cite{todorov2008general}. While there are multiple instantiations of this framework, they typically include the cost or reward function as an additional factor in a factor graph, and infer the optimal conditional distribution over actions conditioned on states. The solution can be shown to optimize an entropy-augmented reinforcement learning objective or to correspond to the solution to a maximum entropy learning problem \cite{toussaint2009robot}.
Intuitively, framing control as inference produces policies that aim to capture not only the single deterministic behavior that has the lowest cost, but the entire range of low-cost behaviors, explicitly maximizing the entropy of the corresponding policy. Instead of learning the best way to perform the task, the resulting policies try to learn \emph{all} of the ways of performing the task. It should now be apparent why such policies might be preferred: if we can learn all of the ways that a given task might be performed, the resulting policy can serve as a good initialization for finetuning to a more specific behavior (e.g. first learning all the ways that a robot could move forward, and then using this as an initialization to learn separate running and bounding skills); a better exploration mechanism for seeking out the best mode in a multi-modal reward landscape; and a more robust behavior in the face of adversarial perturbations, where the ability to perform the same task in multiple different ways can provide the agent with more options to recover from perturbations.

Unfortunately, solving such maximum entropy stochastic policy learning problems in the general case is challenging. A number of methods have been proposed, including Z-learning \cite{todorov2006linearly}, maximum entropy inverse RL \cite{ziebart2008maximum}, approximate inference using message passing \cite{toussaint2009robot}, $\Psi$-learning \cite{rawlik2012stochastic}, and G-learning \cite{fox2015taming}, as well as more recent proposals in deep RL such as PGQ~\cite{o2016pgq},
but these generally operate either on simple tabular representations, which are difficult to apply to continuous or high-dimensional domains, or employ a simple parametric representation of the policy distribution, such as a conditional Gaussian. Therefore, although the policy is optimized to perform the desired skill in many different ways,
the resulting distribution is typically very limited in terms of its representational power, even if the \emph{parameters} of that distribution
are represented by an expressive function approximator, such as a neural network.

How can we extend the framework of maximum entropy policy search to arbitrary policy distributions? 
In this paper, we borrow an idea from energy-based models, which in turn reveals an intriguing connection between Q-learning, actor-critic algorithms, and probabilistic inference. In our method, we formulate a stochastic policy as a (conditional) energy-based model (EBM), with the energy function corresponding to the ``soft'' Q-function obtained when optimizing the maximum entropy objective. In high-dimensional continuous spaces, sampling from this policy, just as with any general EBM, becomes intractable. We borrow from the recent literature on EBMs to devise an approximate sampling procedure based on training a separate sampling network, which is optimized to produce unbiased samples from the policy EBM. This sampling network can then be used both for updating the EBM and for action selection. In the parlance of reinforcement learning, the sampling network is the actor in an actor-critic algorithm.
This reveals an intriguing connection: entropy regularized actor-critic algorithms can be viewed as approximate Q-learning methods, with the actor serving the role of an approximate sampler from an intractable posterior. We explore this connection further in the paper, and in the course of this discuss connections to popular deep RL methods such as deterministic policy gradient (DPG)~\cite{silver2014deterministic, lillicrap2015continuous}, 
normalized advantage functions (NAF)~\cite{gu2016continuous}, and PGQ~\cite{o2016pgq}.

The principal contribution of this work is a tractable, efficient algorithm for optimizing arbitrary multimodal stochastic policies represented by energy-based models, as well as a discussion that relates this method to other recent algorithms in RL and probabilistic inference. In our experimental evaluation, we explore two potential applications of our approach. First, we demonstrate improved exploration performance in tasks with multi-modal reward landscapes, where conventional deterministic or unimodal methods are at high risk of falling into suboptimal local optima. Second, we explore how our method can be used to provide a degree of compositionality in reinforcement learning by showing that stochastic energy-based policies can serve as a much better initialization for learning new skills than either random policies or policies pretrained with conventional maximum reward objectives.

\vspace{-2mm}
\section{Preliminaries}
\label{sec:preliminaries}
\vspace{-1mm}

In this section, we will define the reinforcement learning problem that we are addressing and briefly summarize the maximum entropy policy search objective. We will also present a few useful identities that we will build on in our algorithm, which will be presented in \autoref{sec:training}.

\vspace{-2mm}
\subsection{Maximum Entropy Reinforcement Learning}
\label{sec:maxent}

We will address learning of maximum entropy policies with approximate inference for reinforcement learning in continuous action spaces. Our reinforcement learning problem can be defined as policy search in an infinite-horizon Markov decision process (MDP), which consists of the tuple
$(\sspace, \aspace, \pdyn, \reward)$,
The state space $\sspace$ and action space $\aspace$ are assumed to be continuous, and the state transition probability $\pdyn:\ \sspace \times \sspace \times \aspace \rightarrow [0,\, \infty)$ represents the probability density of the next state $\stp\in\sspace$ given the current state $\st\in\sspace$ and action $\at\in\aspace$. The environment emits a reward $\reward: \sspace \times \aspace \rightarrow  [\rmin,\rmax]$ on each transition, which we will abbreviate as $\rt\triangleq \reward(\st, \at)$ to simplify notation. We will also use $\rho_\policy(\st)$ and $\rho_\policy(\st,\at)$ to denote the state and state-action marginals of the trajectory distribution induced by a policy $\policy(\at|\st)$. 

Our goal is to learn a policy $\policy(\at|\st)$. We can define the standard reinforcement learning objective in terms of the above quantities as
\vspace{-1mm}
\begin{equation}
\policy\opt_\text{std} = \arg\max_{\policy} \sum_t \E{(\st, \at)\sim \rho_\policy}{\reward(\st,\at)}.
\end{equation}
\vspace{-3mm}\\
Maximum entropy RL augments the reward with an entropy term, such that the optimal policy aims to maximize its entropy at each visited state:
\vspace{-5mm}
\begin{align}
\label{eq:maxent_objective}\\
\resizebox{1.0\hsize}{!}{$\policy\opt_\text{MaxEnt} \!=\! \arg\max_{\policy} \sum_t \E{(\st, \at) \sim \rho_\policy}{\reward(\st,\at) \!+\! \alpha\ent(\policy(\sdots|\st))},$}\notag
\end{align}
\noindent where $\alpha$ is an optional but convenient parameter that can be used to determine the relative importance of entropy and reward.\footnote{In principle, $1/\alpha$ can be folded into the reward function, eliminating the need for an explicit multiplier, but in practice, it is often convenient to keep $\alpha$ as a hyperparameter.} Optimization problems of this type have been explored in a number of prior works~\cite{kappen2005path,todorov2006linearly,ziebart2008maximum}, which are covered in more detail in \autoref{sec:related}. Note that this objective differs qualitatively from the behavior of Boltzmann exploration~\cite{sallans2004reinforcement} and PGQ~\cite{o2016pgq}, which greedily maximize entropy at the current time step, but do not explicitly optimize for policies that aim to reach \emph{states} where they will have high entropy in the future. This distinction is crucial, since the maximum entropy objective can be shown to maximize the entropy of the entire trajectory distribution for the policy $\policy$, while the greedy Boltzmann exploration approach does not~\cite{ziebart2008maximum,levine2014learning}. As we will discuss in \autoref{sec:experiments}, this maximum entropy formulation has a number of benefits, such as improved exploration in multimodal problems and better pretraining for later adaptation.

If we wish to extend either the conventional or the maximum entropy RL objective to infinite horizon problems, it is convenient to also introduce a discount factor $\discount$ to ensure that the sum of expected rewards (and entropies) is finite. In the context of policy search algorithms, the use of a discount factor is actually a somewhat nuanced choice, and writing down the precise objective that is optimized when using the discount factor is non-trivial \cite{thomas2014bias}. We defer the full derivation of the discounted objective to \aref{app:proofs}, since it is unwieldy to write out explicitly, but we will use the discount $\discount$ in the following derivations and in our final algorithm.

\vspace{-1mm}
\subsection{Soft Value Functions and Energy-Based Models}
\vspace{-1mm}
Optimizing the maximum entropy objective in \autoref{eq:maxent_objective} 
provides us with a framework for training stochastic policies, but we must still choose a representation for these policies. The choices in prior work include discrete multinomial distributions~\cite{o2016pgq} and Gaussian distributions~\cite{rawlik2012stochastic}. However, if we want to use a very general class of distributions that can represent complex, multimodal behaviors, we can instead opt for using general energy-based policies of the form
\vspace{-0mm}
\begin{equation}
\policy(\at|\st) \propto \exp\left(-\mathcal{E}(\st,\at)\right),
\end{equation}
\vspace{-3mm}\\
where $\mathcal{E}$ is an energy function that could be represented, for example, by a deep neural network. If we use a universal function approximator for $\mathcal{E}$, we can represent any distribution $\policy(\at|\st)$. There is a close connection between such energy-based models and \emph{soft} versions of value functions and Q-functions, where we set $\mathcal{E}(\st,\at) = -\frac{1}{\temp}\Qsoft(\st,\at)$ and use the following theorem:
\begin{theorem}
\label{the:ebm}
Let the soft Q-function be defined by 
\begin{align}
&\Qsoft\opt(\st, \at) = \reward_t + \label{eq:soft_state_action_value}\\
&\ \E{(\stp,\dots)\sim\rho_\policy\!}{\sum_{l=1}^\infty\discount^l\left(\reward_{t+l}\! +\! \temp\entropy\left(\policy\opt_\mathrm{MaxEnt}(\sdots|\state_{t+l})\right)\right)}\!,\notag
\end{align}
and soft value function by 
\vspace{-2mm}
\begin{align}
&\Vsoft\opt(\st) = \temp\log\int_{\aspace} \exp\left(\frac{1}{\temp}\Qsoft\opt(\st, \action')\right)d\action'.
\label{eq:soft_value}
\end{align}
Then the optimal policy for \autoref{eq:maxent_objective} is given by
\begin{equation}
\resizebox{0.88\hsize}{!}{$\policy\opt_\mathrm{MaxEnt}(\at|\st)\! =\! \exp\!\left(\frac{1}{\temp}\!\left(\Qsoft\opt(\st, \at)\! -\! \Vsoft\opt(\st)\right)\right).$}
\label{eq:softpolicy}
\end{equation}
\end{theorem}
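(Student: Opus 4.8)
The plan is a dynamic-programming argument tailored to the entropy-augmented return: establish a ``soft'' Bellman optimality equation, solve the per-state variational problem it induces in closed form, and then verify that the resulting state-wise greedy policy is globally optimal.

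First I would let $\Vsoft\opt(\st)$ denote the optimal value of the objective in \autoref{eq:maxent_objective} started from $\st$, and observe that $\Qsoft\opt(\st,\at)$ as defined in \autoref{eq:soft_state_action_value} is exactly this optimal value-to-go with the first action clamped to $\at$ (with no entropy bonus charged for the given $\at$, since the entropy terms there run over $l\ge 1$). Peeling off one step and using the Markov property yields the coupled identities $\Qsoft\opt(\st,\at) = \rt + \discount\,\E{\stp\sim\pdyn}{\Vsoft\opt(\stp)}$ and $\Vsoft\opt(\st) = \max_{\policy}\,\E{\at\sim\policy(\sdots|\st)}{\Qsoft\opt(\st,\at) - \temp\log\policy(\at|\st)}$, where the maximum ranges over probability densities on $\aspace$ and the $-\temp\log\policy$ term contributes exactly the entropy bonus $\temp\entropy(\policy(\sdots|\st))$ at the current state.

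The inner maximization I would solve with a Gibbs-variational (KL) identity. Writing $Z(\st) = \int_\aspace \exp\!\big(\tfrac{1}{\temp}\Qsoft\opt(\st,\action')\big)\,d\action'$, the bracketed functional equals $\temp\log Z(\st) - \temp\,\kl{\policy(\sdots|\st)}{Z(\st)\inv\exp(\tfrac{1}{\temp}\Qsoft\opt(\st,\sdots))}$; since the KL term is nonnegative and vanishes exactly when $\policy$ equals the Gibbs distribution, the unique maximizer is $\policy(\at|\st)\propto \exp(\tfrac{1}{\temp}\Qsoft\opt(\st,\at))$, the optimal value is $\Vsoft\opt(\st)=\temp\log Z(\st)$ --- which is precisely \autoref{eq:soft_value} --- and replacing the normalizer $Z(\st)$ by $\exp(\tfrac{1}{\temp}\Vsoft\opt(\st))$ gives $\policy\opt_\mathrm{MaxEnt}(\at|\st) = \exp\!\big(\tfrac{1}{\temp}(\Qsoft\opt(\st,\at)-\Vsoft\opt(\st))\big)$, as claimed.

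The main obstacle is making the dynamic-programming step rigorous in the infinite-horizon, continuous-space setting. One must check that the soft backup $\mathcal{T}Q(\st,\at) = \rt + \discount\,\E{\stp\sim\pdyn}{\temp\log\int_\aspace \exp(\tfrac{1}{\temp} Q(\stp,\action'))\,d\action'}$ is a $\discount$-contraction in sup-norm --- the log-sum-exp soft-maximum is monotone and commutes with additive constants, hence a nonexpansion, and $\discount<1$ supplies the contraction --- so that its fixed point is unique and equals $\Qsoft\opt$; and that the state-wise greedy policy constructed above actually attains that fixed point, i.e.\ that maximizing entropy state-by-state coincides with maximizing the entropy of the entire trajectory distribution. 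I would establish this either by backward induction over a truncated horizon followed by a limiting argument, or directly from the fixed-point characterization. The points demanding the most care are the discount bookkeeping (deferred to \aref{app:proofs}), the measurability and integrability conditions needed to interchange $\max$ and $\E{}{}$, and finiteness of $Z(\st)$ when $\aspace$ is unbounded.
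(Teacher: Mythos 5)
Your proposal is correct in substance but follows a genuinely different architecture from the paper's proof. You argue via the Bellman \emph{optimality} route: posit $\Vsoft\opt$ as the optimal entropy-augmented value, derive the coupled soft Bellman optimality equations, solve the per-state inner maximization in closed form with the Gibbs-variational (KL) identity, and then secure existence/uniqueness through the $\discount$-contraction of the log-sum-exp backup. The paper instead proceeds by \emph{policy improvement}: for an arbitrary policy $\policy$ it defines $\tilde\policy(\sdots|\state)\propto\exp(Q^{\policy}_{\mathrm{soft}}(\state,\sdots))$, uses the very same KL decomposition $\entropy(\policy(\sdots|\state))+\E{\action\sim\policy}{Q^{\policy}_{\mathrm{soft}}(\state,\action)}=-\kl{\policy(\sdots|\state)}{\tilde\policy(\sdots|\state)}+\log\int\exp(Q^{\policy}_{\mathrm{soft}}(\state,\action))\,d\action$ to get a one-step greedy improvement, telescopes this along the trajectory to conclude $Q^{\tilde\policy}_{\mathrm{soft}}\geq Q^{\policy}_{\mathrm{soft}}$ pointwise, and then argues that policy iteration converges to a fixed point that must have the energy-based form (invoking the contraction only afterwards, for uniqueness). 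So the key lemma is shared---both proofs reduce the per-state problem to nonnegativity of a KL divergence---but the surrounding scaffolding differs. Your route gives existence and uniqueness in one stroke from the fixed point and avoids reasoning about a sequence of policies, at the cost of having to justify the dynamic-programming principle itself (that the quantity in \autoref{eq:soft_state_action_value}, defined as the return of the \emph{optimal} policy with the first action clamped, coincides with the optimal value-to-go, and that $\max$ and expectation interchange); you correctly flag these as the delicate points. The paper's route sidesteps a direct verification of the DP principle---any non-optimal policy is strictly improvable toward the Boltzmann form, so the optimum must already have that form---but leans on convergence of policy iteration ``under certain regularity conditions,'' which is left equally informal. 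Neither argument is fully rigorous about measurability or finiteness of the partition function on unbounded $\aspace$; your proposal is at least explicit about which conditions would need to be checked.
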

\begin{proof}
\vspace{-2mm}
See \aref{app:max_ent_policy} as well as \cite{ziebart2010modeling}.
\end{proof}
\vspace{-2mm}
\autoref{the:ebm} connects the maximum entropy objective in \autoref{eq:maxent_objective} and energy-based models, where $\frac{1}{\temp}\Qsoft(\st, \at)$ acts as the negative energy, and $\frac{1}{\temp}\Vsoft(\st)$ serves as the log-partition function.
As with the standard Q-function and value function, we can relate the Q-function to the value function at a future state via a soft Bellman equation:
\begin{theorem}
\label{the:soft_bellman}
The soft Q-function in \autoref{eq:soft_state_action_value} satisfies the soft Bellman equation 
\begin{align}
\Qsoft\opt(\st, \at) = \rt + \discount\E{\stp\sim\pdyn}{\Vsoft\opt(\stp)},
\label{eq:soft_bellman}
\end{align}
where the soft value function $\Vsoft\opt$ is given by \autoref{eq:soft_value}.
\end{theorem}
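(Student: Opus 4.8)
The plan is to route through the \emph{on-policy} soft value function, i.e. the actual expected discounted return-plus-entropy collected by the optimal policy $\policy\opt_\mathrm{MaxEnt}$ of \autoref{the:ebm},
\[
\tilde V(\st) \triangleq \E{(\at,\stp,\dots)\sim\rho_\policy}{\textstyle\sum_{l=0}^\infty \discount^l\left(\reward_{t+l} + \temp\entropy\left(\policy\opt_\mathrm{MaxEnt}(\sdots|\state_{t+l})\right)\right)},
\]
and to prove two facts: (i) $\Qsoft\opt(\st,\at) = \rt + \discount\,\E{\stp\sim\pdyn}{\tilde V(\stp)}$ holds essentially by definition, and (ii) $\tilde V \equiv \Vsoft\opt$, the log-partition form of \autoref{eq:soft_value}. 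Chaining (i) and (ii) yields the soft Bellman equation.

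For step (i), I would take the definition of $\Qsoft\opt$ in \autoref{eq:soft_state_action_value}, peel off the $l=0$ term — which contributes exactly $\rt$ — and in the remaining tail $\sum_{l\ge 1}\discount^l(\reward_{t+l}+\temp\entropy(\,\cdot\,))$ factor out one $\discount$ and re-index by $l'=l-1$. Conditioning the inner trajectory expectation on $\stp\sim\pdyn(\sdots\,|\,\st,\at)$ and invoking the Markov property (the law of $(\atp,\stp[+1],\dots)$ given $\stp$ is the law of a fresh trajectory started at $\stp$), the tail collapses to $\discount\,\E{\stp\sim\pdyn}{\tilde V(\stp)}$. This is just bookkeeping.

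For step (ii), I would first apply the same one-step peeling to $\tilde V$ itself: since $\temp\entropy(\policy\opt_\mathrm{MaxEnt}(\sdots|\st))$ depends only on $\st$, and the remaining tail conditioned on $(\st,\at)$ is precisely $\Qsoft\opt(\st,\at)-\rt$, one obtains
\[
\tilde V(\st) = \E{\at\sim\policy\opt_\mathrm{MaxEnt}}{\Qsoft\opt(\st,\at)} + \temp\entropy\left(\policy\opt_\mathrm{MaxEnt}(\sdots|\st)\right).
\]
Then I would use \autoref{eq:softpolicy}: taking logarithms gives $\temp\log\policy\opt_\mathrm{MaxEnt}(\at|\st) = \Qsoft\opt(\st,\at)-\Vsoft\opt(\st)$, hence, because $\Vsoft\opt(\st)$ is independent of $\at$,
\[
\temp\entropy\left(\policy\opt_\mathrm{MaxEnt}(\sdots|\st)\right) = -\E{\at\sim\policy\opt_\mathrm{MaxEnt}}{\Qsoft\opt(\st,\at)} + \Vsoft\opt(\st).
\]
Substituting into the previous display cancels the $\E{\at}{\Qsoft\opt}$ terms, leaving $\tilde V(\st)=\Vsoft\opt(\st)$, and plugging this into (i) finishes the proof.

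The main obstacle is not the algebra but the measure-theoretic hygiene of the infinite-horizon setup: one must ensure the sums and expectations defining $\Qsoft\opt$, $\tilde V$, and $\Vsoft\opt$ are finite and interchangeable (guaranteed by bounded rewards $\reward\in[\rmin,\rmax]$, $\discount<1$, and a finite-entropy assumption on $\policy\opt_\mathrm{MaxEnt}$), and that the discounted maximum-entropy objective is read consistently with the discounted formulation deferred to \aref{app:proofs}. Granting that, the only substantive content is the cancellation in step (ii), which relies entirely on the closed form of the optimal policy furnished by \autoref{the:ebm}.
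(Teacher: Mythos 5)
Your proposal is correct and follows essentially the same route as the paper's proof in Appendix A.2: peel off one step of the discounted sum defining $\Qsoft\opt$, then use the Boltzmann form of the optimal policy from \autoref{eq:softpolicy} to show that $\temp\entropy(\policy\opt_\mathrm{MaxEnt}(\sdots|\stp)) + \E{\atp\sim\policy\opt_\mathrm{MaxEnt}}{\Qsoft\opt(\stp,\atp)}$ collapses to the log-partition value $\Vsoft\opt(\stp)$. Your intermediate quantity $\tilde V$ just makes explicit the bookkeeping the paper compresses into a single two-line display, and your caveats about integrability match the boundedness assumptions the paper invokes.
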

\begin{proof}
\vspace{-3mm}
See \aref{app:soft_bellman_equation}, as well as  \cite{ziebart2010modeling}.
\end{proof}
\vspace{-3mm}
The soft Bellman equation is a generalization of the conventional (hard) equation, where we can recover the more standard equation as $\temp \rightarrow 0$, which causes \autoref{eq:soft_value} to approach a hard maximum over the actions. In the next section, we will discuss how we can use these identities to derive a Q-learning style algorithm for learning maximum entropy policies, and how we can make this practical for arbitrary Q-function representations via an approximate inference procedure.

\vspace{-2mm}
\section{Training Expressive Energy-Based Models via Soft Q-Learning}
\label{sec:training}
\vspace{-1mm}
In this section, we will present our proposed reinforcement learning algorithm, which is based on the soft Q-function described in the previous section, but can be implemented via a tractable stochastic gradient descent procedure with approximate sampling. We will first describe the general case of soft Q-learning, and then present the inference procedure that makes it tractable to use with deep neural network representations in high-dimensional continuous state and action spaces. In the process, we will relate this Q-learning procedure to inference in energy-based models and actor-critic algorithms.

\vspace{-3mm}
\subsection{Soft Q-Iteration}
\vspace{-2mm}

We can obtain a solution to \autoref{eq:soft_bellman} by iteratively updating estimates of $\Vsoft\opt$ and $\Qsoft\opt$. This leads to a fixed-point iteration that resembles Q-iteration:
\begin{theorem}
\label{the:soft_q_iteration}
Soft Q-iteration. Let $\Qsoft(\sdots, \sdots)$ and $\Vsoft(\sdots)$ be bounded and assume that $\int_{\aspace}\exp\!(\frac{1}{\temp}\Qsoft(\cdot, \action')\!)d\action'\!\! <\!\! \infty$ and that $\Qsoft\opt < \infty$ exists. Then the fixed-point iteration 
\begin{align}
\!\!\Qsoft(\st, \at) &\!\leftarrow\! \rt \!+\! \gamma \E{\stp\sim\pdyn}{\Vsoft(\stp)},\ \forall \st, \at\label{eq:q_fixed_point_iteration}\\
\Vsoft(\st) &\!\leftarrow\! \temp\log\!\int_{\aspace}\!\! \exp\left(\!\frac{1}{\temp}\Qsoft(\st, \action')\right)\!d\action'\!,\ \!\forall \st\label{eq:v_fixed_point_iteration}
\end{align}
converges to $\Qsoft\opt$ and $\Vsoft\opt$, respectively.
\end{theorem}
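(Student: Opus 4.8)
The plan is to recognize the soft Q-iteration as a contraction mapping on the space of bounded Q-functions, so that the Banach fixed-point theorem yields convergence to a unique fixed point, and then separately identify that fixed point as $\Qsoft\opt$. First I would substitute the value-function update \autoref{eq:v_fixed_point_iteration} into \autoref{eq:q_fixed_point_iteration} to obtain a single operator $\mathcal{T}$ acting on $\Qsoft$, namely $(\mathcal{T} Q)(\st,\at) = \rt + \discount\, \E{\stp\sim\pdyn}{\temp\log\int_\aspace \exp(\frac{1}{\temp} Q(\stp,\action'))\, d\action'}$. The boundedness assumptions on $Q$ and $V$ together with $\rmin \le \rt \le \rmax$ ensure $\mathcal{T}$ maps bounded functions to bounded functions, so the iteration is well-defined and stays within the space where the contraction argument applies.

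The key step is the contraction estimate: I would show $\|\mathcal{T} Q_1 - \mathcal{T} Q_2\|_\infty \le \discount \|Q_1 - Q_2\|_\infty$. The reward term cancels, and the transition expectation is a nonexpansion in sup-norm, so it suffices to bound the log-sum-exp (soft-max) term. The crucial inequality is that the soft maximum $f \mapsto \temp\log\int_\aspace \exp(\frac{1}{\temp} f(\action'))\, d\action'$ is $1$-Lipschitz in the sup-norm: if $f_2 \le f_1 + c$ pointwise then $\temp\log\int \exp(\frac{1}{\temp} f_2) \le \temp\log\int \exp(\frac{1}{\temp}(f_1 + c)) = c + \temp\log\int\exp(\frac{1}{\temp} f_1)$, and symmetrically, giving the bound with $c = \|f_1 - f_2\|_\infty$. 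Combining this with the discount factor $\discount < 1$ gives the contraction, and Banach's theorem then gives a unique fixed point $\bar Q$ to which the iteration converges, with $\Vsoft$ converging to the corresponding $\bar V$ from \autoref{eq:v_fixed_point_iteration}.

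Finally I would argue that this fixed point coincides with $(\Qsoft\opt, \Vsoft\opt)$. By \autoref{the:soft_bellman}, $\Qsoft\opt$ satisfies exactly the soft Bellman equation \autoref{eq:soft_bellman} with $\Vsoft\opt$ given by \autoref{eq:soft_value}, which is precisely the fixed-point equation $\mathcal{T}\Qsoft\opt = \Qsoft\opt$; since the assumption $\Qsoft\opt < \infty$ guarantees it lies in the relevant space and the fixed point is unique, we conclude $\bar Q = \Qsoft\opt$ and $\bar V = \Vsoft\opt$.

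The main obstacle I anticipate is handling the integral over a continuous (possibly unbounded) action space $\aspace$ rigorously: one must ensure the log-sum-exp term is finite and well-behaved, which is exactly why the hypotheses assume $\int_\aspace \exp(\frac{1}{\temp}\Qsoft(\cdot,\action'))\, d\action' < \infty$ and that $Q$, $V$ stay bounded. A secondary subtlety is that when $\aspace$ has infinite Lebesgue measure the soft-max need not dominate the ordinary max, so the $\temp\to 0$ remark and the standard-Q-learning analogy should be treated informally rather than as part of this proof; the contraction argument itself, however, goes through unchanged as long as the stated integrability and boundedness conditions hold along the iteration.
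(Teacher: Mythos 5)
Your proposal is correct and follows essentially the same route as the paper: both collapse the two updates into a single operator $\mathcal{T}$, prove it is a $\discount$-contraction in the sup-norm via the $1$-Lipschitz property of the log-sum-exp (the paper's $\eps$-shift argument is exactly your inequality), and invoke uniqueness of the fixed point together with the soft Bellman equation to identify the limit as $\Qsoft\opt$. Your explicit appeal to Banach's theorem and the identification step via \autoref{the:soft_bellman} is slightly more spelled out than the paper's ``follows immediately,'' but the substance is identical.
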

\begin{proof}
\vspace{-2mm}
See \aref{app:soft_bellman_equation} as well as \cite{fox2015taming}.
\end{proof}
\vspace{-2mm}
We refer to the updates in \autoref{eq:q_fixed_point_iteration} and \eqref{eq:v_fixed_point_iteration} as the soft Bellman backup operator that acts on the soft value function, and denote it by $\mathcal{T}$. The maximum entropy policy in \autoref{eq:softpolicy} can then be recovered by iteratively applying this operator until convergence. However, there are several practicalities that need to be considered in order to make use of the algorithm. First, the soft Bellman backup cannot be performed exactly in continuous or large state and action spaces, and second, sampling from the energy-based model in \autoref{eq:softpolicy} is intractable in general. We will address these challenges in the following sections.

\vspace{-2mm}
\subsection{Soft Q-Learning}
\vspace{-1mm}
This section discusses how the Bellman backup in \autoref{the:soft_q_iteration} can be implemented in a practical algorithm that uses a finite set of samples from the environment, resulting in a method similar to Q-learning. Since the soft Bellman backup is a contraction (see \aref{app:soft_bellman_equation}), the optimal value function is the fixed point of the Bellman backup, and we can find it by optimizing for a Q-function for which the soft Bellman error $|\mathcal{T}Q - Q|$ is minimized at all states and actions. While this procedure is still intractable due to the integral in \autoref{eq:v_fixed_point_iteration} and the infinite set of all states and actions, we can express it as a stochastic optimization, which leads to a stochastic gradient descent update procedure. We will model the soft Q-function with a function approximator with parameters $\qparams$ and denote it as $\Qsoftparams(\st,\at)$. 

To convert \autoref{the:soft_q_iteration} into a stochastic optimization problem, we first express the soft value function in terms of an expectation via importance sampling:
\vspace{-4mm}\\
\begin{align}
\Vsoft^\qparams(\st) = \temp \log \E{q_{\action'}}{\frac{\exp\left(\frac{1}{\temp}\Qsoft^\qparams(\st,\action')\right)}{q_{\action'}(\action')}},
\label{eq:soft_value_as_expectation}
\end{align}
\vspace{-4mm}\\
where $q_{\action'}$ can be an arbitrary distribution over the action space. Second, by noting the identity 
$g_1(x) = g_2(x)\ \forall x \in \mathbb{X}\ \Leftrightarrow\ \E{x\sim q}{(g_1(x) - g_2(x))^2} = 0$, where $q$ can be any strictly positive density function on $\mathbb{X}$, we can express the soft Q-iteration in an equivalent form as minimizing
\vspace{-4mm}\\
 \begin{equation}
\resizebox{0.88\hsize}{!}{$\qloss(\qparams)\! =\! \E{\st\sim q_\st,\at\sim q_\at\!}{\frac{1}{2}\left(\Qhatsoft^\qtargetparams(\st, \at)\!-\! \Qsoft^\qparams(\st, \at)\right)^2}\!,$}
\label{eq:qloss}
\end{equation}
\vspace{-4mm}\\
where $q_\st, q_\at$ are positive over $\sspace$ and $\aspace$ respectively, $\Qhatsoftparams(\st, \at) = \rt + \discount\EEE_{\stp\sim\pdyn}[\Vsoft^{\bar{\theta}}(\stp)]$ is a \emph{target} Q-value, with $\Vsoft^{\bar{\theta}}(\stp)$ given by \autoref{eq:soft_value_as_expectation} and $\theta$ being replaced by the target parameters, $\bar{\theta}$.

This stochastic optimization problem can be solved approximately using stochastic gradient descent using sampled states and actions. While the sampling distributions $q_\st$ and $q_\at$ can be arbitrary, we typically use real samples from rollouts of the current policy $\policy(\at|\st) \propto \exp\left(\frac{1}{\temp}\Qsoftparams(\st, \at)\right)$. For $q_{\action'}$ we have more options. A convenient choice is a uniform distribution. However, this choice can scale poorly to high dimensions. A better choice is to use the current policy, which produces an unbiased estimate of the soft value as can be confirmed by substitution. This overall procedure yields an iterative approach that optimizes over the Q-values, which we summarize in \autoref{sec:alg}.

However, in continuous spaces, we still need a tractable way to sample from the policy $\policy(\at|\st) \propto \exp\left(\frac{1}{\temp}\Qsoftparams(\st, \at)\right)$, both to take on-policy actions and, if so desired, to generate action samples for estimating the soft value function. Since the form of the policy is so general, sampling from it is intractable. We will therefore use an approximate sampling procedure, as discussed in the following section.

\vspace{-2mm}
\subsection{Approximate Sampling and Stein Variational Gradient Descent (SVGD)}
\label{sec:inference}
\vspace{-1mm}
In this section we describe how we can approximately sample from the soft Q-function. Existing approaches that sample from energy-based distributions generally fall into two categories: methods that use Markov chain Monte Carlo (MCMC) based sampling \cite{sallans2004reinforcement}, and methods that learn a stochastic sampling network trained to output approximate samples from the target distribution \cite{zhao2016energy,kim2016deep}. Since sampling via MCMC is not tractable when the inference must be performed online (\eg~when executing a policy), we will use a sampling network based on Stein variational gradient descent (SVGD) \cite{liu2016stein} and amortized SVGD \cite{wang2016learning}. Amortized SVGD has several intriguing properties: First, it provides us with a stochastic sampling network that we can query for extremely fast sample generation. Second, it can be shown to converge to an accurate estimate of the posterior distribution of an EBM. Third, the resulting algorithm, as we will show later, strongly resembles actor-critic algorithm, which provides for a simple and computationally efficient implementation and sheds light on the connection between our algorithm and prior actor-critic methods.

Formally, we want to learn a state-conditioned stochastic neural network $\at = f^\pparams(\smpl; \st)$, parametrized by $\pparams$, that maps noise samples $\smpl$ drawn from a normal Gaussian, or other arbitrary distribution, into unbiased action samples from the target EBM corresponding to $\Qsoftparams$.
We denote the induced distribution of the actions as $\policy^\pparams(\at | \st)$, and we want to find parameters $\pparams$ so that the induced distribution approximates the energy-based distribution in terms of the KL divergence
\vspace{-2mm}
\begin{align}
&\ploss(\pparams; \st) = \label{eq:ploss}\\
&\ \ \kl{\policy^\pparams({\sdots|\st})}{\exp\left(\frac{1}{\alpha}\left(\Qsoftparams(\st, \sdots) - \Vsoftparams\right)\right)}\notag.
\end{align}
Suppose we ``perturb'' a set of independent samples $\ati = f^\pparams(\smpli; \st)$ in appropriate directions $\Delta f^\pparams(\smpli; \st)$, the induced KL divergence can be reduced. Stein variational gradient descent \cite{liu2016stein} provides the most greedy directions as a functional
\begin{align}
&\resizebox{1.0\hsize}{!}{
$\Delta f^\pparams(\sdots; \st) =\label{eq:stein_gradient}
\mathbb{E}_{\at\sim\policy^\pparams}\left[\kernel(\at, f^\pparams(\sdots; \st))\nabla_{\action'} \Qsoftparams(\st, \action')\big|_{\action' = \at} \right.$} \\[-5mm] 
&\hspace{18mm} + \alpha\left.\nabla_{\action'}\kernel(\action', f^\pparams(\sdots; \st))\big|_{\action' = \at}\right],\notag
\end{align}
where $\kernel$ is a kernel function (typically Gaussian, see details in \aref{app: hyperparameters}). 
To be precise, $\Delta f^\pparams$ is the optimal direction in the reproducing kernel Hilbert space of $\kappa$, and is thus not strictly speaking the gradient of \autoref{eq:ploss}, but it turns out that we can set $\frac{\partial\ploss}{\partial\at} \propto \Delta f^\pparams$ as explained in \cite{wang2016learning}.
With this assumption, we can use the chain rule and backpropagate the Stein variational gradient into the policy network according to
\begin{align}
\frac{\partial \ploss(\pparams; \st)}{\partial \pparams} \propto \E{\xi}{\Delta f^\pparams(\xi;\st)\frac{\partial f^\pparams(\xi; \st)}{\partial \pparams}},
\label{eq:actor_gradient}
\end{align}
and use any gradient-based optimization method to learn the optimal sampling network parameters. 
The sampling network $f^\pparams$ can be viewed as an actor in an actor-critic algorithm. We will discuss this connection in \autoref{sec:related}, but first we will summarize our complete maximum entropy policy learning algorithm.

\vspace{-2mm}
\subsection{Algorithm Summary}
\label{sec:alg}
\vspace{-1mm}
To summarize, we propose the soft Q-learning algorithm for learning maximum entropy policies in continuous domains. The algorithm proceeds by alternating between collecting new experience from the environment, and updating the soft Q-function and sampling network parameters.  The experience is stored in a replay memory buffer $\mathcal{D}$ as standard in deep Q-learning \cite{mnih2013playing}, and the parameters are updated using random minibatches from this memory. The soft Q-function updates use a delayed version of the target values \cite{mnih2015human}. For optimization, we use the ADAM \cite{kingma2014adam} optimizer and empirical estimates of the gradients, which we denote by $\hat \nabla$. The exact formulae used to compute the gradient estimates is deferred to \aref{app:implementation}, which also discusses other implementation details, but we summarize an overview of soft Q-learning in \autoref{alg:soft-q-learning}.

\begin{algorithm}[htb]
\caption{Soft Q-learning}
\label{alg:soft-q-learning}
\begin{algorithmic}
\STATE
\ \ $\qparams, \pparams  \sim$ some initialization distributions.\\
\ \ Assign target parameters: $\bar\qparams \leftarrow \qparams$, $\bar\pparams \leftarrow \pparams$. \\
\ \ $\mathcal{D} \leftarrow $ empty replay memory.
\vspace{1mm}\\
\FOR{each epoch}
   \FOR{each $t$}
        \STATE \textbf{Collect experience}\\
   		\ \ Sample an action for $\st$ using $f^\pparams$: \\
   		\ \ \ \ $\at \leftarrow f^\pparams(\smpl; \st)$ where $\smpl \sim \gauss\left(\zeros, \eye\right)$.\\
        \ \ Sample next state from the environment:\\
        \ \ \ \ $\stp \sim \pdyn(\stp|\st, \at)$.\\
        \ \ Save the new experience in the replay memory:\\
        \ \ \ \ $\mathcal{D} \leftarrow \mathcal{D} \cup \left\{(\st, \at, \reward(\st, \at), \stp)\right\}.$

        \STATE\textbf{Sample a minibatch from the replay memory}\\
        \ \ $\{(\sti, \ati,\rti, \stpi)\}_{i=0}^{N}\sim \mathcal{D}.$

        \STATE\textbf{Update the soft Q-function parameters}\\
        \ \ Sample $\{\action^{(i,j)}\}_{j=0}^M \sim q_{\action'}$ for each $\stpi$.\\
        \ \ Compute empirical soft values $\hat{V}_{\mathrm{soft}}^{\bar{\theta}}(\stpi)$ in \eqref{eq:soft_value_as_expectation}.\\
        \ \ Compute  empirical gradient $\hat\nabla_\qparams\qloss$ of \eqref{eq:qloss}.\\
        \ \ Update $\qparams$ according to $\hat\nabla_\qparams\qloss$ using ADAM.

        \STATE \textbf{Update policy}\\
        \ \ Sample $\{\smplij\}_{j=0}^M \sim \gauss\left(\zeros, \eye\right)$ for each $\sti$.\\
        \ \ Compute actions $\atij = f^\pparams(\smplij, \sti)$.\\
        \ \ Compute $\Delta f^\pparams$ using empirical estimate of \eqref{eq:stein_gradient}.\\
        \ \ Compute empiricial estimate of \eqref{eq:actor_gradient}: $\hat\nabla_\pparams\ploss$.\\
        \ \ Update $\pparams$ according to $\hat\nabla_\pparams\ploss$ using ADAM.
    \ENDFOR
    \IF{epoch \textit{mod} update\_interval $= 0$}
    \STATE Update target parameters: $\bar\qparams \leftarrow \qparams$, $\bar\pparams \leftarrow \pparams$.
    \ENDIF
\ENDFOR
\end{algorithmic}
\end{algorithm}

\vspace{-2mm}
\section{Related Work}
\label{sec:related}
\vspace{-1mm}

Maximum entropy policies emerge as the solution when we cast optimal control as probabilistic inference. In the case of linear-quadratic systems, the mean of the maximum entropy policy is exactly the optimal deterministic policy~\cite{todorov2008general}, which has been exploited to construct practical path planning methods based on iterative linearization and probabilistic inference techniques~\cite{toussaint2009robot}.
In discrete state spaces, the maximum entropy policy can be obtained exactly. This has been explored in the context of linearly solvable MDPs \cite{todorov2006linearly}
and, in the case of inverse reinforcement learning, MaxEnt IRL \cite{ziebart2008maximum}.
In continuous systems and continuous time, path integral control studies maximum entropy policies and maximum entropy planning \cite{kappen2005path}. In contrast to these prior methods, our work is focused on extending the maximum entropy policy search framework to high-dimensional continuous spaces and highly multimodal objectives, via expressive general-purpose energy functions represented by deep neural networks. A number of related methods have also used maximum entropy policy optimization as an intermediate step for optimizing policies under a standard expected reward objective \cite{peters2010relative,neumann2011variational,rawlik2012stochastic,fox2015taming}. Among these, the work of \citet{rawlik2012stochastic} resembles ours in that it also makes use of a temporal difference style update to a soft Q-function. However, unlike this prior work, we focus on general-purpose energy functions with approximate sampling, rather than analytically normalizable distributions. A recent work \cite{liu2017stein} also considers an entropy regularized objective, though the entropy is on policy parameters, not on sampled actions. Thus the resulting policy may not represent an arbitrarily complex multi-modal distribution with a single parameter. The form of our sampler resembles the stochastic networks proposed in recent work on hierarchical learning~\cite{florensa2017stochastic}. However this prior work uses a task-specific reward bonus system to encourage stochastic behavior, while our approach is derived from optimizing a general maximum entropy objective.
A closely related concept to maximum entropy policies is Boltzmann exploration, which uses the exponential of the standard Q-function as the probability of an action~\cite{kaelbling1996reinforcement}.
A number of prior works have also explored representing policies as energy-based models, with the Q-value obtained from an energy model such as a restricted Boltzmann machine (RBM) \cite{sallans2004reinforcement,elfwing2010free,otsuka2010free,heess2012actor}. Although these methods are closely related, they have not, to our knowledge, been extended to the case of deep network models, have not made extensive use of approximate inference techniques, and have not been demonstrated on the complex continuous tasks. More recently, \citet{o2016pgq} drew a connection between Boltzmann exploration and entropy-regularized policy gradient, though in a theoretical framework that differs from maximum entropy policy search: unlike the full maximum entropy framework, the approach of \citet{o2016pgq} only optimizes for maximizing entropy at the current time step, rather than planning for visiting future states where entropy will be further maximized. This prior method also does not demonstrate learning complex multi-modal policies in continuous action spaces.

Although we motivate our method as Q-learning, its structure resembles an actor-critic algorithm. It is particularly instructive to observe the connection between our approach and the deep deterministic policy gradient method (DDPG)~\cite{lillicrap2015continuous}, which updates a Q-function critic according to (hard) Bellman updates, and then backpropagates the Q-value gradient into the actor, similarly to NFQCA \cite{hafner2011reinforcement}. Our actor update differs only in the addition of the $\kappa$ term. Indeed, without this term, our actor would estimate a maximum a posteriori (MAP) action, rather than capturing the entire EBM distribution. This suggests an intriguing connection between our method and DDPG: if we simply modify the DDPG critic updates to estimate soft Q-values, we recover the MAP variant of our method. Furthermore, this connection allows us to cast DDPG as simply an approximate Q-learning method, where the actor serves the role of an approximate maximizer. This helps explain the good performance of DDPG on off-policy data. We can also make a connection between our method and policy gradients. In \aref{app:pg}, we show that the policy gradient for a policy represented as an energy-based model closely corresponds to the update in soft Q-learning. Similar derivation is presented in a concurrent work \cite{schulman2017equivalence}.

\vspace{-3mm}
\section{Experiments}
\label{sec:experiments}
\vspace{-2mm}

Our experiments aim to answer the following questions: (1) Does our soft Q-learning method accurately capture a multi-modal policy distribution? (2) Can soft Q-learning with energy-based policies aid exploration for complex tasks that require tracking multiple modes? (3) Can a maximum entropy policy serve as a good initialization for finetuning on different tasks, when compared to pretraining with a standard deterministic objective? We compare our algorithm to DDPG~\cite{lillicrap2015continuous}, which has been shown to achieve better sample efficiency on the continuous control problems that we consider than other recent techniques such as REINFORCE \cite{williams1992simple}, TRPO \cite{schulman2015trust}, and A3C \cite{mnih2016asynchronous}. This comparison is particularly interesting since, as discussed in \autoref{sec:related}, DDPG closely corresponds to a deterministic maximum a posteriori variant of our method.
The detailed experimental setup can be found in \aref{app:experiments}. Videos of all experiments\footnote{\href{https://sites.google.com/view/softqlearning/home}{https://sites.google.com/view/softqlearning/home}} and example source code\footnote{\href{https://github.com/haarnoja/softqlearning}{https://github.com/haarnoja/softqlearning}} are available online.

\vspace{-2mm}
\subsection{Didactic Example: Multi-Goal Environment}
\vspace{-1mm}

In order to verify that amortized SVGD can correctly draw samples from energy-based policies of the form $\exp\left(\Qsoftparams(s,a)\right)$, and that our complete algorithm can successful learn to represent multi-modal behavior, we designed a simple ``multi-goal'' environment, in which the agent is a 2D point mass trying to reach one of four symmetrically placed goals. The reward is defined as a mixture of Gaussians, with means placed at the goal positions. An optimal strategy is to go to an arbitrary goal, and the optimal maximum entropy policy should be able to choose each of the four goals at random. The final policy obtained with our method is illustrated in \autoref{fig:multigoal}. The Q-values indeed have complex shapes, being unimodal at $s = (-2,0)$, convex at $s = (0,0)$, and bimodal at $s = (2.5, 2.5)$. The stochastic policy samples actions closely following the energy landscape, hence learning diverse trajectories that lead to all four goals. In comparison, a policy trained with DDPG randomly commits to a single goal.
\begin{figure}[bt]
    \centering
    \includegraphics[width=0.48\textwidth]{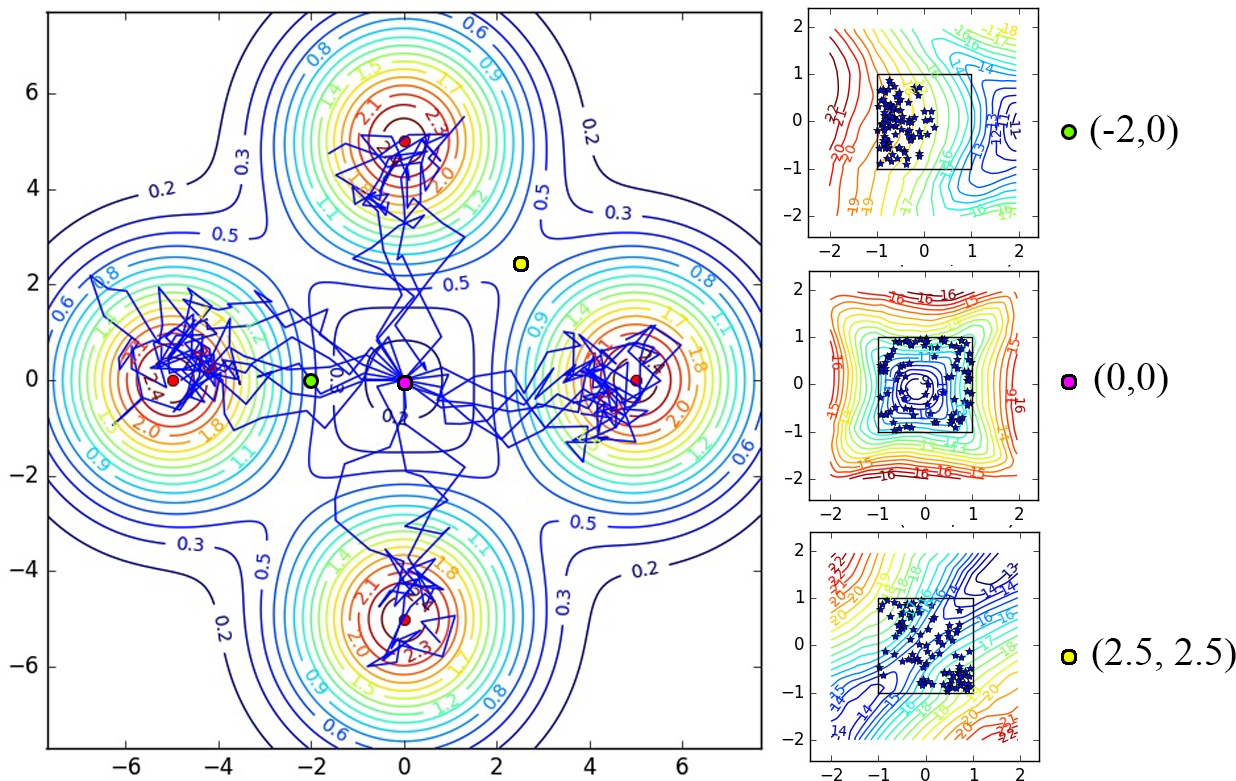}
    \caption{
    Illustration of the 2D multi-goal environment. Left: trajectories from a policy learned with our method (solid blue lines). The $x$ and $y$ axes correspond to 2D positions (states). The agent is initialized at the origin. The goals are depicted as red dots, and the level curves show the reward.
    Right: Q-values at three selected states, depicted by level curves (red: high values, blue: low values). The $x$ and $y$ axes correspond to 2D velocity (actions) bounded between -1 and 1. Actions sampled from the policy are shown as blue stars. Note that, in regions (e.g. $(2.5, 2.5)$) between the goals, the method chooses multimodal actions.
    \label{fig:multigoal}
    }
\end{figure}

\vspace{-2mm}
\subsection{Learning Multi-Modal Policies for Exploration}
\vspace{-1mm}
Though not all environments have a clear multi-modal reward landscape as in the ``multi-goal'' example, multi-modality is prevalent in a variety of tasks. For example, a chess player might try various strategies before settling on one that seems most effective, and an agent navigating a maze may need to try various paths before finding the exit. During the learning process, it is often best to keep trying multiple available options until the agent is confident that one of them is the best (similar to a bandit problem \cite{lai1985asymptotically}). 
However, deep RL algorithms for continuous control typically use unimodal action distributions, which are not well suited to capture such multi-modality. As a consequence, such algorithms may prematurely commit to one mode and converge to suboptimal behavior.
To evaluate how maximum entropy policies might aid exploration, we constructed simulated continuous control environments where tracking multiple modes is important for success.
The first experiment uses a simulated swimming snake (see \autoref{fig:swimmer_and_ant}), which receives a reward equal to its speed along the $x$-axis, either forward or backward. However, once the swimmer swims far enough forward, it crosses a ``finish line'' and receives a larger reward. Therefore, the best learning strategy is to explore in both directions until the bonus reward is discovered, and then commit to swimming forward. As illustrated in \autoref{fig:swimmer_all_dist_plots} in \aref{app:additional-results}, our method is able to recover this strategy, keeping track of both modes until the finish line is discovered. All stochastic policies eventually commit to swimming forward. The deterministic DDPG method shown in the comparison commits to a mode prematurely, with only 80\% of the policies converging on a forward motion, and 20\% choosing the suboptimal backward mode.

\begin{figure}
    \centering
    \subfigure[Swimming snake]
    {\includegraphics[width=0.45\columnwidth]{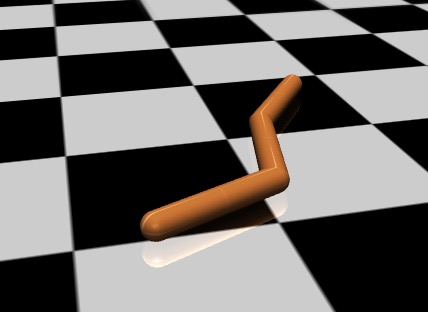}}
    \subfigure[Quadrupedal robot]
    {\includegraphics[width=0.45\columnwidth]{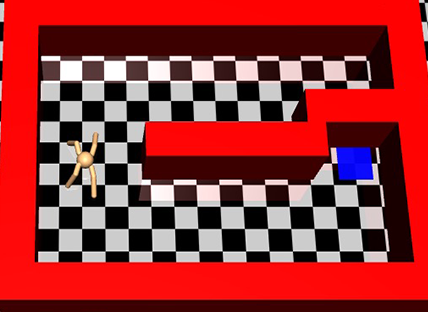}}
    \caption{Simulated robots used in our experiments. \label{fig:swimmer_and_ant}}
\end{figure}

The second experiment studies a more complex task with a continuous range of equally good options prior to discovery of a sparse reward goal. In this task, a quadrupedal 3D robot (adapted from \citet{schulman2015high})
needs to find a path through a maze to a target position (see \autoref{fig:swimmer_and_ant}). The reward function is a Gaussian centered at the target.
The agent may choose either the upper or lower passage, which appear identical at first, but the upper passage is blocked by a barrier. Similar to the swimmer experiment, the optimal strategy requires exploring both directions and choosing the better one. \autoref{fig:ant_maze}(b) compares the performance of DDPG and our method. The curves show the minimum distance to the target achieved so far and the threshold equals the minimum possible distance if the robot chooses the upper passage. Therefore, successful exploration means reaching below the threshold. All policies trained with our method manage to succeed, while only $60\%$ policies trained with DDPG converge to choosing the lower passage. 

\begin{figure}
    \subfigure[Swimmer (higher is better)]
    {\includegraphics[height=0.33\columnwidth]{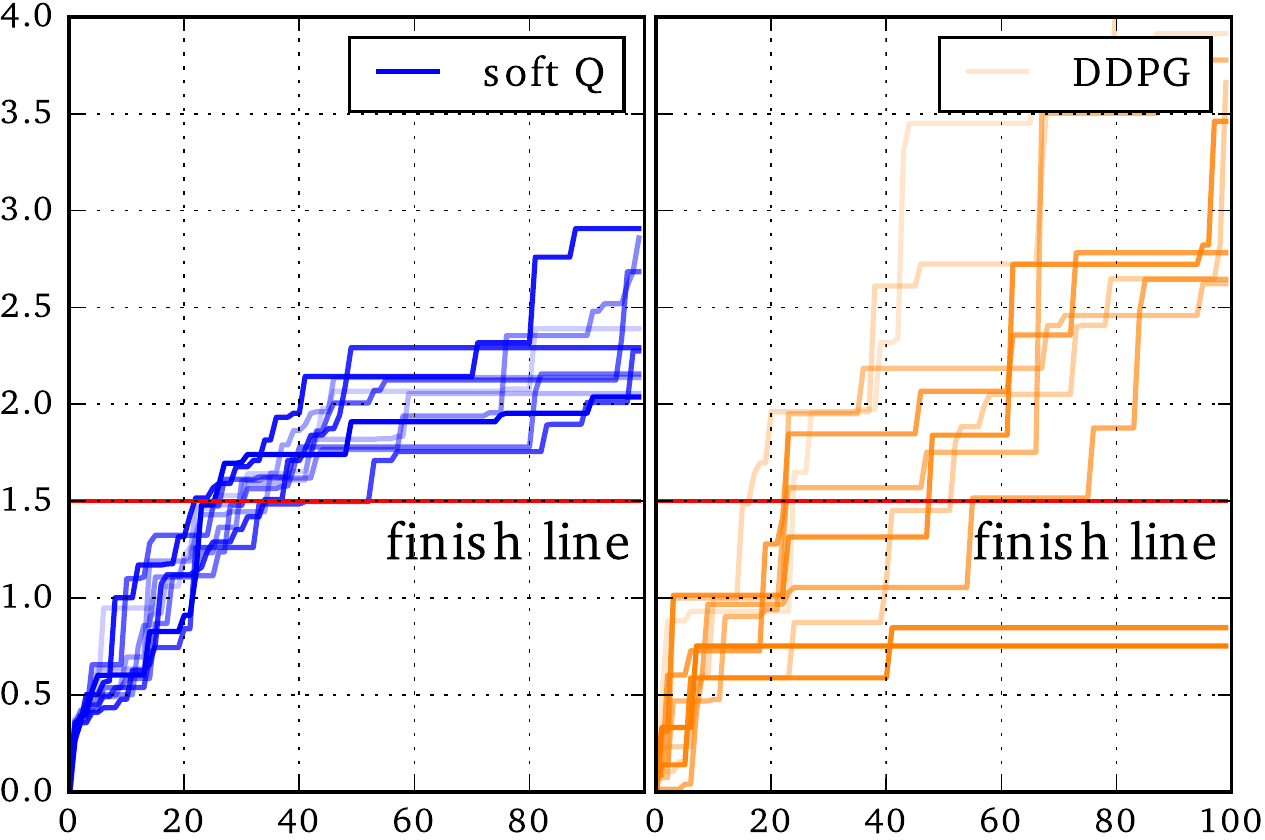}}
    \subfigure[Quadruped (lower is better)]
    {\includegraphics[height=0.33\columnwidth]{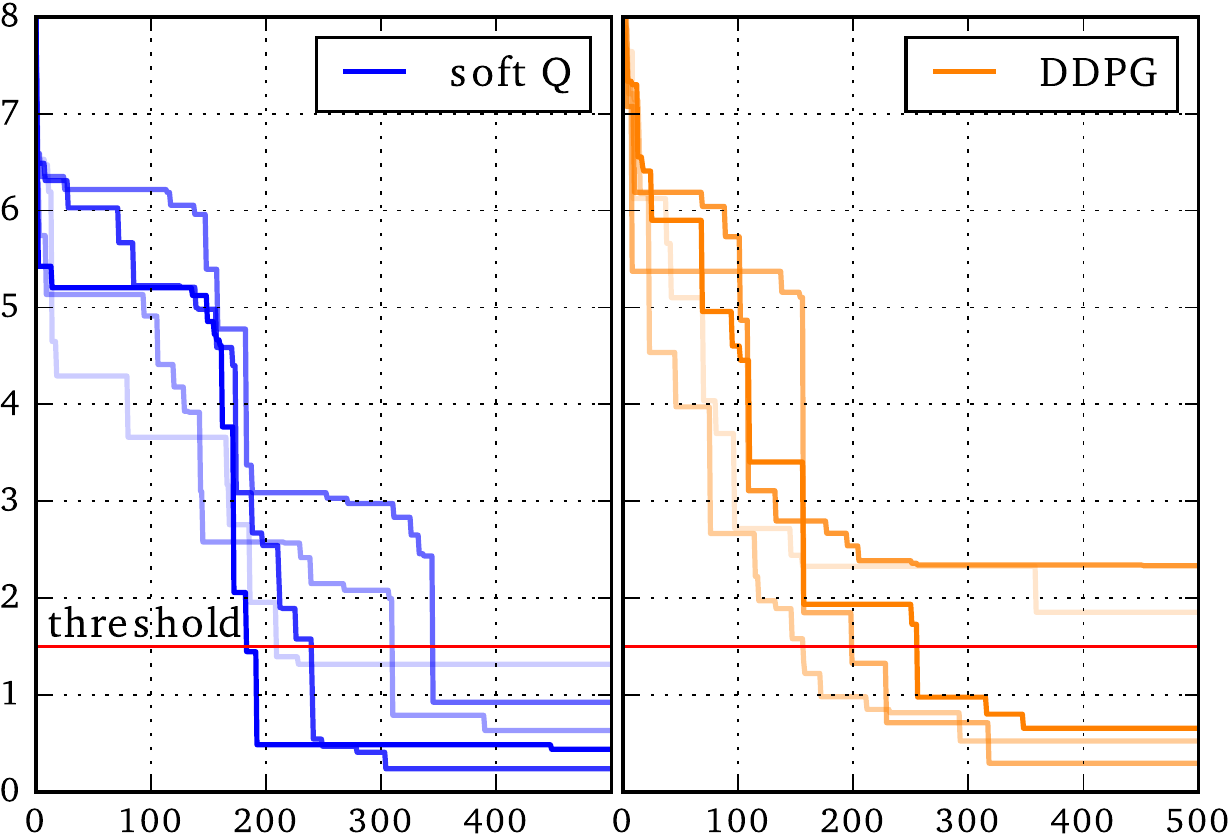}}
    \caption{
    Comparison of soft Q-learning and DDPG on the swimmer snake task and the quadrupedal robot maze task. (a) Shows the maximum traveled forward distance since the beginning of training for several runs of each algorithm; there is a large reward after crossing the finish line. (b) Shows our method was able to reach a low distance to the goal faster and more consistently. The different lines show the minimum distance to the goal since the beginning of training. For both domains, all runs of our method cross the threshold line, acquiring the more optimal strategy, while some runs of DDPG do not.
    \label{fig:ant_maze}
    }
\end{figure}

\subsection{Accelerating Training on Complex Tasks with Pretrained Maximum Entropy Policies}
A standard way to accelerate deep neural network training is task-specific initialization \cite{goodfellow2016deep_ch8.7.4}, where a network trained for one task is used as initialization for another task. The first task might be something highly general, such as classifying a large image dataset, while the second task might be more specific, such as fine-grained classification with a small dataset. Pretraining has also been explored in the context of RL \cite{shelhamer2016loss}. However, in RL, near-optimal policies are often near-deterministic, which makes them poor initializers for new tasks. In this section, we explore how our energy-based policies can be trained with fairly broad objectives to produce an initializer for more quickly learning more specific tasks. 
\begin{wrapfigure}{r}{.25\columnwidth}
\setlength{\unitlength}{0.5\columnwidth}
\vspace{-0.18in}\hspace{-0.22in}
\includegraphics[width=0.30\columnwidth]{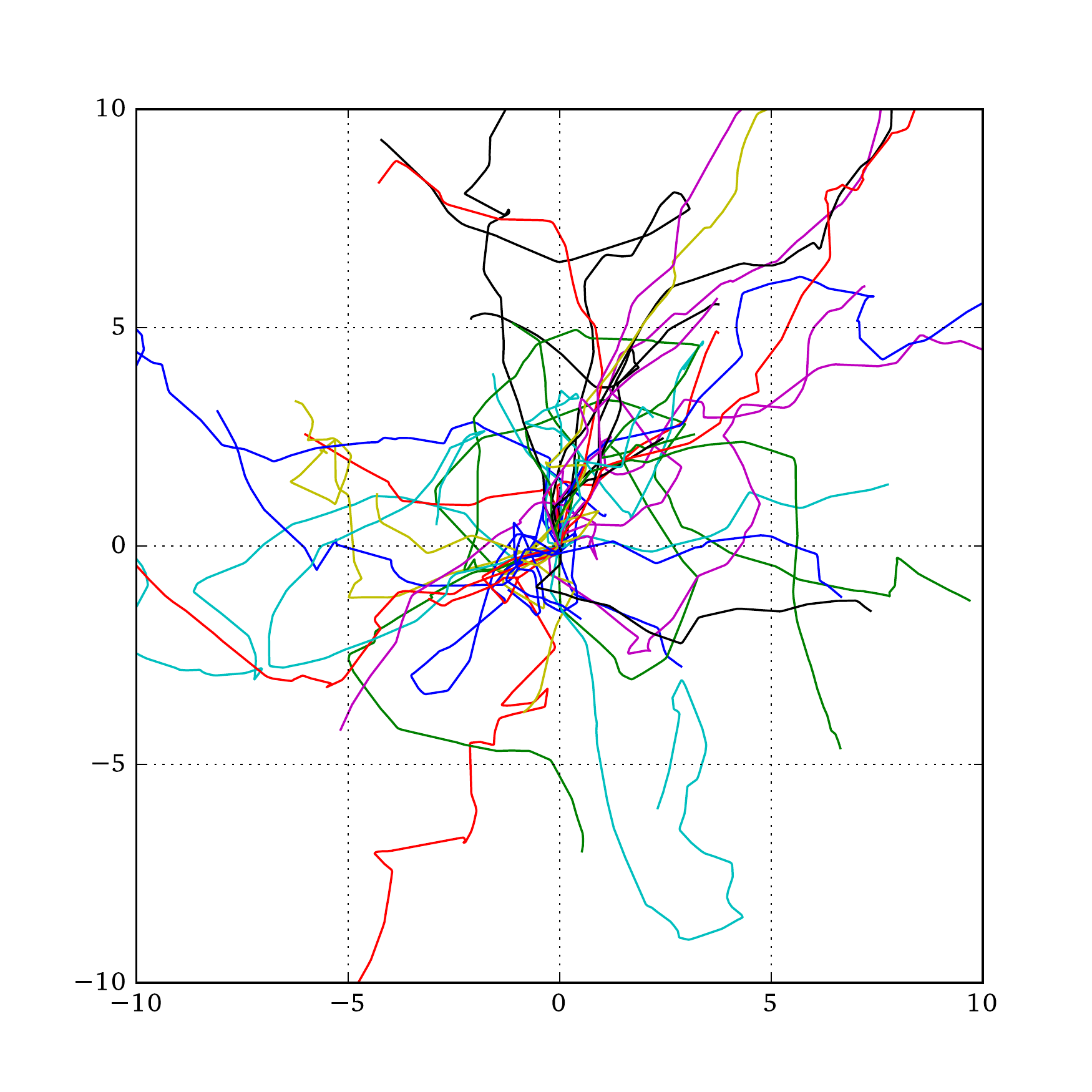}
\vspace{-0.20in}
\end{wrapfigure}
We demonstrate this on a variant of the quadrupedal robot task.
The pretraining phase involves learning to locomote in an arbitrary direction, with a reward that simply equals the speed of the center of mass. The resulting policy moves the agent quickly to an randomly chosen direction. An overhead plot of the center of mass traces is shown above to illustrate this. This pretraining is similar in some ways to recent work on modulated controllers \cite{heess2016learning} and hierarchical models \cite{florensa2017stochastic}. However, in contrast to these prior works, we do not require any task-specific high-level goal generator or reward.%

\autoref{fig:pretrain_envs} also shows a variety of test environments that we used to finetune the running policy for a specific task. In the  hallway environments, the agent receives the same reward, but the walls block sideways motion, so the optimal solution requires learning to run in a particular direction. Narrow hallways require choosing a more specific direction, but also allow the agent to use the walls to funnel itself. The U-shaped maze requires the agent to learn a curved trajectory in order to arrive at the target, with the reward given by a Gaussian bump at the target location.

\begin{figure}
    \subfigure[]
    {\includegraphics[height=0.22\columnwidth]{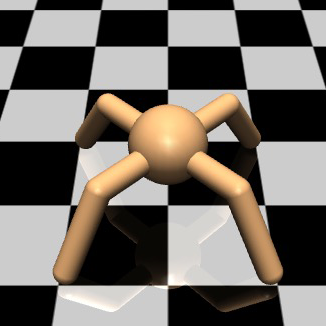}}
    \subfigure[]
    {\includegraphics[height=0.22\columnwidth]{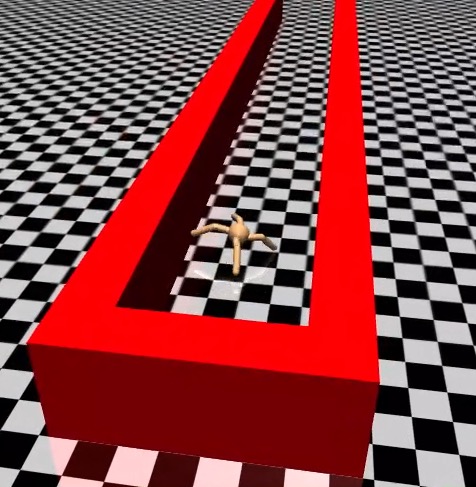}}
    \subfigure[]
    {\includegraphics[height=0.22\columnwidth]{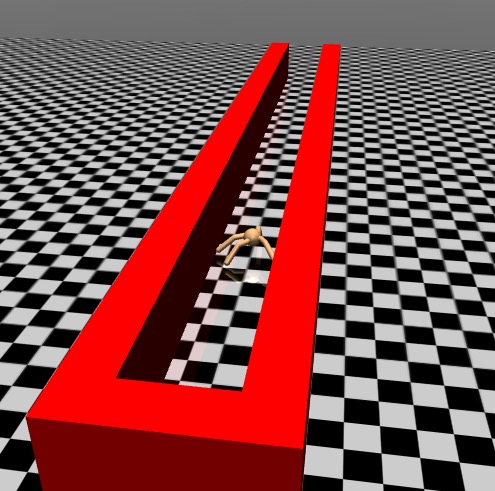}}
    \subfigure[]
    {\includegraphics[height=0.22\columnwidth]{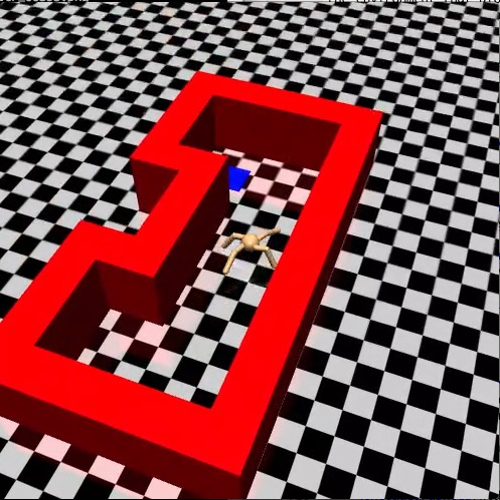}}
    \vspace*{-4mm}
    \caption{Quadrupedal robot (a) was trained to walk in random directions in an empty pretraining environment (details in \autoref{fig:pretrain_trajs}, see \aref{app:additional-results}), and then finetuned on a variety of tasks, including a wide (b), narrow (c), and U-shaped hallway (d).
    \label{fig:pretrain_envs}
    \vspace{-3mm}
    }
\end{figure}

As illustrated in \autoref{fig:pretrain_trajs} in \aref{app:additional-results}, the pretrained policy explores the space extensively and in all directions. This gives a good initialization for the policy, allowing it to learn the behaviors in the test environments more quickly than training a policy with DDPG from a random initialization, as shown in \autoref{fig:pretraining_curves}. We also evaluated an alternative pretraining method based on deterministic policies learned with DDPG. However, deterministic pretraining chooses an arbitrary but consistent direction in the training environment, providing a poor initialization for finetuning to a specific task, as shown in the results plots. 

\begin{figure}
    \includegraphics[width=\columnwidth]{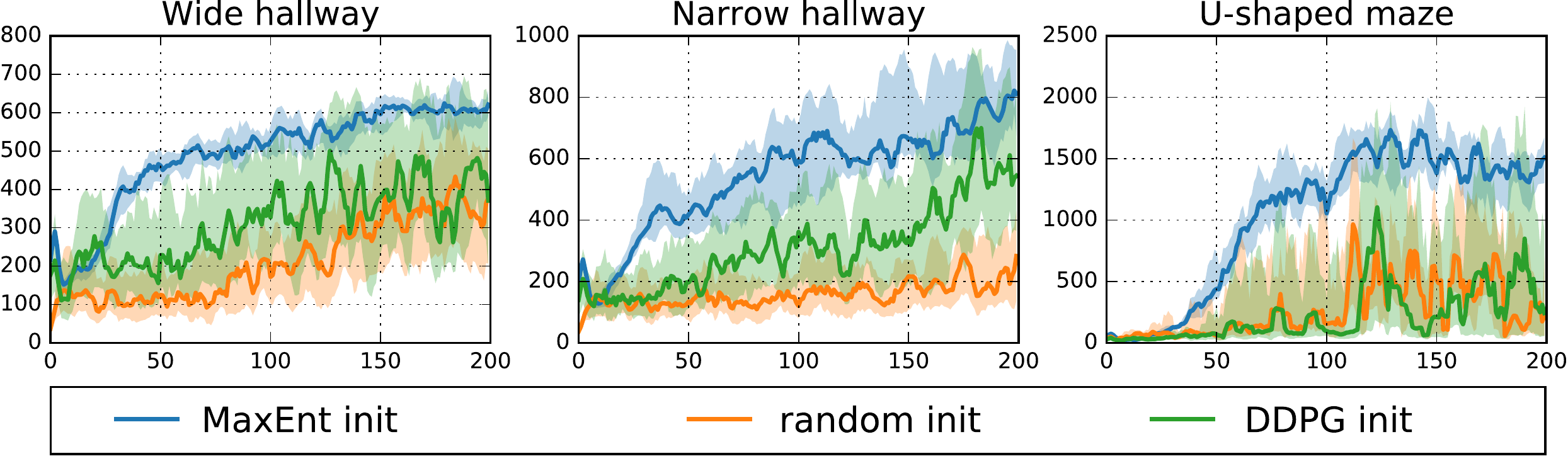}
    \vspace*{-5mm}
    \caption{Performance in the downstream task with fine-tuning (MaxEnt) or training from scratch (DDPG). The $x$-axis shows the training iterations. The y-axis shows the average discounted return. Solid lines are average values over 10 random seeds. Shaded regions correspond to one standard deviation.
    \label{fig:pretraining_curves}
    \vspace{-3mm}
    }
\end{figure}

\vspace{-3mm}
\section{Discussion and Future Work}
\vspace{-2mm}

We presented a method for learning stochastic energy-based policies with approximate inference via Stein variational gradient descent (SVGD). Our approach can be viewed as a type of soft Q-learning method, with the additional contribution of using approximate inference to obtain complex multimodal policies. 
The sampling network trained as part of SVGD can also be viewed as tking the role of an actor in an actor-critic algorithm. 
Our experimental results show that our method can effectively capture complex multi-modal behavior on problems ranging from toy point mass tasks to complex torque control of simulated walking and swimming robots. The applications of training such stochastic policies include improved exploration in the case of multimodal objectives and compositionality via pretraining general-purpose stochastic policies that can then be efficiently finetuned into task-specific behaviors.

While our work explores some potential applications of energy-based policies with approximate inference, an exciting avenue for future work would be to further study their capability to represent complex behavioral repertoires and their potential for composability. In the context of linearly solvable MDPs, several prior works have shown that policies trained for different tasks can be composed to create new optimal policies~\cite{da2009linear,todorov2009compositionality}. While these prior works have only explored simple, tractable representations, our method could be used to extend these results to complex and highly multi-modal deep neural network models, making them suitable for composable control of complex high-dimensional systems, such as humanoid robots. This composability could be used in future work to create a huge variety of near-optimal skills from a collection of energy-based policy building blocks.

\section{Acknowledgements}
We thank Qiang Liu for insightful discussion of SVGD, and thank Vitchyr Pong and Shane Gu for help with implementing DDPG. Haoran Tang and Tuomas Haarnoja are supported by Berkeley Deep Drive.

\bibliography{refs}
\bibliographystyle{icml2017}

\makeatletter
\def\mathcolor#1#{\@mathcolor{#1}}
\def\@mathcolor#1#2#3{%
  \protect\leavevmode
  \begingroup
    \color#1{#2}#3%
  \endgroup
}
\makeatother

\newcommand{\EE}[2]{\mathbb{E}_{#1}\left[#2\right]}
\newcommand{\pitilde}{\tilde{\pi}}
\newcommand{\pitildered}{\mathcolor{red}{\pitilde}}
\newcommand{\Qpisoft}{Q^{\pi}_{\text{soft}}}
\newcommand{\Qpitildesoft}{Q^{\tilde{\pi}}_{\text{soft}}}
\newcommand{\Qpistarsoft}{Q^{\pi^*}_{\text{soft}}}
\newcommand{\Vpisoft}{V^{\pi}_{\text{soft}}}
\newcommand{\eps}{\varepsilon}

\onecolumn

\appendix
\appendixpage
\section{Policy Improvement Proofs}
\label{app:proofs}

In this appendix, we present proofs for the theorems that allow us to show that soft Q-learning leads to policy improvement with respect to the maximum entropy objective.
First, we define a slightly more nuanced version of the maximum entropy objective that allows us to incorporate a discount factor. This definition is complicated by the fact that, when using a discount factor for policy gradient methods, we typically do not discount the state distribution, only the rewards. In that sense, discounted policy gradients typically do not optimize the true discounted objective. Instead, they optimize average reward, with the discount serving to reduce variance, as discussed by \citet{thomas2014bias}. However, for the purposes of the derivation, we can define the objective that \emph{is} optimized under a discount factor as
\begin{align*}
\pi_\text{MaxEnt}^* = 
\arg\max_\pi \sum_t \EE{(\st,\at) \sim \rho_\pi}{
\sum_{l=t}^\infty \discount^{l-t} \EE{(\state_l,\action_l)}{ \reward(\st,\at) + \temp \ent(\pi(\sdots|\st)) | \st, \at }
}.
\end{align*}
This objective corresponds to maximizing the discounted expected reward and entropy for future states originating from every state-action tuple $(\st,\at)$ weighted by its probability $\rho_\pi$ under the current policy. Note that this objective still takes into account the entropy of the policy at future states, in contrast to greedy objectives such as Boltzmann exploration or the approach proposed by \citet{o2016pgq}.

We can now derive policy improvement results for soft Q-learning. We start with the definition of the soft Q-value $\Qpisoft$ for any policy $\pi$ as the expectation under $\pi$ of the discounted sum of rewards and entropy :
\begin{equation}
\begin{split}
    \Qpisoft(\state,\action) \triangleq \rz + \E{\tau \sim \pi, \state_0=\state, \action_0=\action}{\sum_{t=1}^{\infty} \gamma^t (r_t + \entropy(\pi(\sdots | \state_t)))}.
\end{split}
\end{equation}
Here, $\tau = (\state_0, \action_0, \state_1, \action_1, \ldots)$ denotes the trajectory originating at $(\state,\action)$. Notice that for convenience, we set the entropy parameter $\alpha$ to 1. The theory can be easily adapted by dividing rewards by $\alpha$. 

The discounted maximum entropy policy objective can now be defined as
\begin{equation}
J(\pi)\triangleq \sum_t \EE{(\st,\at) \sim \rho_\pi}{\Qpisoft(\st,\at) + \temp\ent(\pi(\sdots|\st))}.
\end{equation}

\subsection{The Maximum Entropy Policy}
\label{app:max_ent_policy}
If the objective function is the expected discounted sum of rewards, the policy improvement theorem \cite{sutton1998reinforcement} describes how policies can be improved monotonically. There is a similar theorem we can derive for the maximum entropy objective:
\begin{theorem}
\label{the:policy_improvement}
(Policy improvement theorem) Given a policy $\policy$, define a new policy $\pitilde$ as
\begin{equation}
\pitilde(\sdots |\state) \propto \exp\left(\Qpisoft(\state,\sdots)\right), \quad \forall\state.
\end{equation}
Assume that throughout our computation, $\Q$ is bounded and $\int \exp(\Q(\state,\action)) \ d\action$ is bounded for any $\state$ (for both $\pi$ and $\pitilde$). Then $\Qpitildesoft(\state,\action) \geq \Qpisoft(\state,\action)\ \forall \state,\action$. 
\end{theorem}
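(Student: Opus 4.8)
The plan is to run the classical policy improvement argument with the \emph{soft} value function playing the role of the ordinary value function. First I would introduce
\[
\Vpisoft(\state) \triangleq \E{\action\sim\policy}{\Qpisoft(\state,\action)} + \ent(\policy(\sdots|\state)),
\]
so that the series definition of $\Qpisoft$ immediately yields the soft Bellman relation $\Qpisoft(\state,\action) = \reward(\state,\action) + \discount\,\E{\state'\sim\pdyn}{\Vpisoft(\state')}$ (isolate the $t=1$ term of the sum and recognize the remainder, after taking the inner expectation over $\action_1\sim\policy$, as $\discount\,\E{\state'}{\Vpisoft(\state')}$). The boundedness hypotheses guarantee that all of these quantities, including the entropies, are finite.

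The heart of the argument is a one-line variational fact: for fixed $f$ with $\int\exp(f(\action))\,d\action<\infty$, the density maximizing $\E{\action\sim q}{f(\action)} + \ent(q)$ over all densities $q$ is $q^*(\action)\propto\exp(f(\action))$, and for any other $q$ one has $\E{\action\sim q}{f(\action)} + \ent(q) = \E{\action\sim q^*}{f(\action)} + \ent(q^*) - \kl{q}{q^*} \le \E{\action\sim q^*}{f(\action)} + \ent(q^*)$. Applying this with $f = \Qpisoft(\state,\sdots)$, $q = \policy(\sdots|\state)$ and $q^* = \pitilde(\sdots|\state)$ gives, for every $\state$,
\[
\Vpisoft(\state) \le \E{\action\sim\pitilde}{\Qpisoft(\state,\action)} + \ent(\pitilde(\sdots|\state)).
\]

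Now I would unroll. Plugging this bound into the soft Bellman relation gives $\Qpisoft(\state,\action) \le \reward(\state,\action) + \discount\,\E{\state'\sim\pdyn}{\ent(\pitilde(\sdots|\state')) + \E{\action'\sim\pitilde}{\Qpisoft(\state',\action')}}$, and then I would apply the same two steps to $\Qpisoft(\state',\action')$ and iterate. Each round peels off one more discounted $\reward+\ent$ term, now accumulated along trajectories sampled from $\pitilde$, and pushes the residual $\Qpisoft$ term one more step into the future carrying an extra factor of $\discount$. After $n$ rounds the right-hand side equals the $n$-step truncation of $\reward(\state,\action) + \E{\traj\sim\pitilde,\state_0=\state,\action_0=\action}{\sum_{t=1}^\infty\discount^t(\reward_t+\ent(\pitilde(\sdots|\state_t)))}$ plus a remainder bounded by $\discount^{n}$ times a constant. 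Letting $n\to\infty$, the remainder vanishes (since $\discount\in(0,1)$ and $\Qpisoft$, hence also the entropy terms, are bounded), the truncations converge to $\Qpitildesoft(\state,\action)$, and we conclude $\Qpisoft(\state,\action)\le\Qpitildesoft(\state,\action)$ for all $\state,\action$.

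I expect the main obstacle to be the bookkeeping in this limit: verifying rigorously that the iterated substitution telescopes exactly into the series defining $\Qpitildesoft$, and that the tail term $\discount^n\,\E{}{\Qpisoft(\sdots,\sdots)}$ together with the accumulated entropy contributions tends to $0$ — which is precisely where the assumed bounds on $\Q$ and on $\int\exp(\Q)\,d\action$ are used. A cleaner way to organize this, avoiding the explicit unrolling, is to define the operator $\Psi V(\state) = \ent(\pitilde(\sdots|\state)) + \E{\action\sim\pitilde}{\reward(\state,\action) + \discount\,\E{\state'\sim\pdyn}{V(\state')}}$, observe that $\Psi$ is a $\discount$-contraction in the sup-norm with fixed point $V^{\pitilde}_{\mathrm{soft}}$, note $\Vpisoft\le\Psi\Vpisoft$ by the variational inequality above, and invoke monotonicity of $\Psi$ to obtain $\Vpisoft\le\lim_n\Psi^n\Vpisoft = V^{\pitilde}_{\mathrm{soft}}$; one final soft Bellman step then transfers the inequality to the $Q$-functions.
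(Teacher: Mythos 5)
Your proposal is correct and takes essentially the same route as the paper's proof: your variational fact $\mathbb{E}_{q}[f]+\mathcal{H}(q)=\log\int e^{f}\,d\action-\mathrm{D_{KL}}\left(q\,\middle\|\,q^*\right)$ is precisely the paper's key observation that $\tilde\pi$ greedily maximizes entropy plus expected soft Q-value under $\pi$, and your subsequent unrolling of the soft Bellman recursion along $\tilde\pi$-trajectories is exactly the paper's telescoping chain of inequalities. The contraction-operator reformulation you sketch at the end is only an alternative way of organizing the same bookkeeping.
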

The proof relies on the following observation: if one greedily maximize the sum of entropy and value with one-step look-ahead, then one obtains $\pitilde$ from $\pi$:
\begin{equation}
\begin{split}
\entropy(\pi(\sdots | \state)) + \EE{\action \sim \policy}{\Qpisoft(\state,\action)} \leq \entropy(\pitildered(\sdots | \state)) + \E{\action \sim \pitildered}{\Qpisoft(\state,\action)}.
\end{split}
\end{equation}
The proof is straight-forward by noticing that 
\begin{equation}
\begin{split}
\entropy(\pi(\sdots | \state)) + \E{\action \sim \pi}{\Qpisoft(\state,\action)} = - \kl{\pi(\sdots|\state)}{\pitilde(\sdots|\state)} + \log\int \exp\left(\Qpisoft(\state,\action)\right) \ d\action
\end{split}
\end{equation}
Then we can show that
\begin{align}
\Qpisoft(\state,\action) &= \E{\state_1}{\rz + \discount(\entropy(\pi(\sdots | \state_1)) + \E{\action_1 \sim \pi}{\Qpisoft(\state_1,\action_1)})} \notag\\
&\leq \E{\state_1}{\rz + \discount(\entropy(\pitildered(\sdots | \state_1)) + \E{\action_1 \sim \pitildered}{\Qpisoft(\state_1,\action_1)})} \notag\\
&= \E{\state_1}{\reward_0 + \discount(\entropy(\pitildered(\sdots | \state_1)) + \reward_1)}  + \discount^2\E{\state_2}{\entropy(\pi(\sdots | \state_2)) + \E{\action_2 \sim \pi}{\Qpisoft(\state_2,\action_2)}} \notag\\
&\leq \E{\state_1}{\rz + \discount(\entropy(\pitildered(\sdots | \state_1)) +\reward_1} + \discount^2\E{\state_2}{\entropy(\pitildered(\sdots | \state_2)) + \E{\action_2 \sim \pitildered}{\Qpisoft(\state_2,\action_2)}} \notag\\
&= \E{\state_1\, \action_2\sim\pitildered, \state_2}{\rz + \discount(\entropy(\pitildered(\sdots | \state_1)) + \reward_1) + \discount^2(\entropy(\pitildered(\sdots | \state_2)) + \reward_2)} + \discount^3\E{\state_3}{\entropy(\pitildered(\sdots | \state_3)) + \E{\action_3 \sim \pitildered}{\Qpisoft(\state_3,\action_3)}} \notag\\
&\ \ \vdots \notag\\
&\leq \E{\tau \sim \pitildered}{\rz + \sum_{t=1}^{\infty}\discount^t(\entropy(\pitildered(\sdots | \st)) + \rt)} \notag\\
&= \Qpitildesoft(\state,\action).
\end{align}

With \autoref{the:policy_improvement}, we start from an arbitrary policy $\pi_0$ and define the \textit{policy iteration} as
\begin{equation}
\pi_{i+1}(\sdots |\state) \propto \exp\left(\Q^{\pi_i}_{\mathrm{soft}}(\state,\sdots )\right).
\end{equation}
Then $\Q^{\pi_i}_{\text{soft}}(\state,\action)$ improves monotonically. Under certain regularity conditions, $\pi_i$ converges to $\pi_{\infty}$. Obviously, we have $\pi_{\infty}(\action|\state) \propto_\action \exp\left(\Q^{\pi_{\infty}}(\state,\action)\right)$. Since any non-optimal policy can be improved this way, the optimal policy must satisfy this energy-based form. Therefore we have proven \autoref{the:ebm}.

\subsection{Soft Bellman Equation and Soft Value Iteration}
\label{app:soft_bellman_equation}
Recall the definition of the soft value function:
\begin{equation}
\Vpisoft(\state)\triangleq \log \int \exp \left(\Qpisoft(\state,\action)\right) \ d\action.
\end{equation}
Suppose $\pi(\action|\state) = \exp\left(\Qpisoft(\state,\action) - \Vpisoft(\state)\right)$. Then we can show that
\begin{align}
    \Qpisoft(\state,\action) &= \reward(\state,\action) + \discount \E{\state' \sim \pdyn}{\entropy(\pi(\sdots | \state')) + \E{\action' \sim \pi (\sdots | \state')}{\Qpisoft(\state',\action')}} \notag\\
    &=\reward(\state,\action) + \discount \E{\state' \sim \pdyn}{\Vpisoft(\state')}.
\end{align}
This completes the proof of \autoref{the:soft_bellman}.

Finally, we show that the soft value iteration operator $\mathcal{T}$, defined as 
\begin{equation}
\mathcal{T}\Q(\state,\action)\triangleq \reward(\state,\action) + \discount \E{\state' \sim \pdyn}{\log \int \exp \Q(\state',\action') \ d\action'},
\end{equation}
is a contraction. Then \autoref{the:soft_q_iteration} follows immediately.

The following proof has also been presented by \citet{fox2015taming}. Define a norm on Q-values as $\|\Q_1 - \Q_2\| \triangleq \max_{\state,\action} |\Q_1(\state,\action) - \Q_2(\state,\action)|$. Suppose $\eps = \|\Q_1 - \Q_2\|$. Then 
\begin{align}
    \log \int \exp (\Q_1(\state',\action')) \ d\action' &\leq \log \int \exp( \Q_2(\state',\action')  + \eps)\ d\action'\notag\\
    &= \log \left(\exp(\eps)\int \exp \Q_2(\state',\action') \ d\action'\right) \notag\\
    &= \eps + \log \int \exp \Q_2(\action',\action') \ d\action'.
\end{align}
Similarly, $\log \int \exp \Q_1(\state',\action') \ d\action' \geq -\eps + \log \int \exp \Q_2(\state',\action') \ d\action'$. Therefore $\|\mathcal{T}\Q_1 - \mathcal{T}\Q_2\| \leq \discount \eps = \discount \|\Q_1 - \Q_2\|$. So $\mathcal{T}$ is a contraction. As a consequence, only one Q-value satisfies the soft Bellman equation, and thus the optimal policy presented in \autoref{the:ebm} is unique.

\section{Connection between Policy Gradient and Q-Learning}
\label{app:pg}
We show that entropy-regularized policy gradient can be viewed as performing soft Q-learning on the maximum-entropy objective. First, suppose that we parametrize a stochastic policy as
\begin{align}
\policy^\pgparams(\at|\st) \triangleq \exp\left(\energy^\pgparams(\st, \at) - \bar\energy^\pgparams(\st)\right),
\label{eq:boltzmann_policy}
\end{align}
where $\energy^\pgparams(\st, \at)$ is an energy function with parameters $\pgparams$, and $\bar\energy^\pgparams(\st) =\log\int_\aspace \exp\energy^\pgparams(\st, \at)d\at$ is the corresponding partition function. This is the most general class of policies, as we can trivially transform any given distribution $p$ into exponential form by defining the energy as $\log p$. We can write an entropy-regularized policy gradient as follows:
\begin{align}
\nabla_\pgparams \pgloss(\pgparams) =  \E{(\st, \at)\sim\rho_{\policy^\pgparams}}{\nabla_\pgparams\log\policy^\pgparams(\at|\st)\left(\hat\Q_{\policy^\pgparams}(\st, \at)+ b^\pgparams(\st)\right)}   + \nabla_\pgparams\E{\st\sim\rho_{\policy^\pgparams}}{\ent(\policy^\pgparams(\sdots | \st))},
\label{eq:pg_ent}
\end{align}
where $\rho_{\policy^\pgparams}(\st, \at)$ is the distribution induced by the policy, $\hat\Q_{\policy^\pgparams}(\st, \at)$ is an empirical estimate of the Q-value of the policy, and $b^\pgparams(\st)$ is a state-dependent baseline that we get to choose. The gradient of the entropy term is given by
\begin{align}
\nabla_\pgparams \ent(\policy^\pgparams) = &-\nabla_\pgparams\E{\st\sim\rho_{\policy^\pgparams}}{\E{\at\sim\policy^\pgparams(\at|\st)}{\log\policy^\pgparams(\at|\st)}}\notag\\
=&-\E{(\st, \at)\sim\rho_{\policy^\pgparams}}{\nabla_\pgparams\log\policy^\pgparams(\at|\st)\log\policy^\pgparams(\at|\st) + \nabla_\pgparams\log\policy^\pgparams(\at|\st)}\notag\\
=&-\E{(\st,\at)\sim\rho_{\policy^\pgparams}}{\nabla_\pgparams\log\policy^\pgparams(\at|\st)\left(1 + \log\policy^\pgparams(\at|\st)\right)},
\end{align}
and after substituting this back into \autoref{eq:pg_ent}, noting \autoref{eq:boltzmann_policy}, and choosing $b^\pgparams(\st) = \bar\energy^\pgparams(\st) + 1$, we arrive at a simple form for the policy gradient:
\begin{align}
&=\E{(\st, \at)\sim\rho_{\policy^\pgparams}}{\left(\nabla_\pgparams\energy^\pgparams(\st, \at) - \nabla_\pgparams\bar\energy^\pgparams(\st)\right)\left(\hat\Q_{\policy^\pgparams}(\st, \at) - \energy^\pgparams(\st, \at)\right)}.
\label{eq:pg}
\end{align}

To show that \autoref{eq:pg} indeed correponds to soft Q-learning update, we consider the Bellman error
\begin{align}
\qloss(\qparams) &= \E{\st\sim q_\st, \at \sim q_\at}{\frac{1}{2}\left(\Qhatsoft^\qparams(\st, \at)- \Qsoft^\qparams(\st, \at)\right)^2},
\end{align}
where $\Qhatsoft^\qparams$ is an empirical estimate of the soft Q-function. There are several valid alternatives for this estimate, but in order to show a connection to policy gradient, we choose a specific form
\begin{align}
\Qhatsoft^\qparams(\st, \at) = \Ahatsoft^\qtargetparams(\st, \at) + \Vsoft^\qparams(\st),
\end{align}
where $\Ahatsoft^\qtargetparams$ is an  empirical soft advantage function that is assumed not to contribute the gradient computation. With this choice, the gradient of the Bellman error becomes
\begin{align}
\nabla_\qparams\qloss(\qparams) &= \E{\st\sim q_\st, \at\sim q_\at}{\left(\nabla_\params \Qsoftparams(\st, \at) - \nabla_\qparams\Vsoftparams(\st)\right)\left(\Ahatsoft^\qtargetparams(\st, \at) + \Vsoft^\qparams(\st,\at) - \Qsoftparams(\st, \at) \right)}\notag\\
&= \E{\st\sim q_\st, \at\sim q_\at}{\left(\nabla_\params \Qsoftparams(\st, \at) - \nabla_\qparams\Vsoftparams(\st)\right)\left(\Qhatsoft^\qparams(\st, \at) - \Qsoft^\qparams(\st, \at)\right)}.
\label{eq:advantage_gradient}
\end{align}
Now, if we choose $\energy^\pgparams(\st, \at) \triangleq \Qsoftparams(\st, \at)$ and $q_\st(\st) q_\at(\at) \triangleq\rho_{\policy^\pgparams}(\st, \at)$, we recover the policy gradient in \autoref{eq:pg}. Note that the choice of using an empirical estimate of the soft advantage rather than soft Q-value makes the target independent of the soft value, and at convergence, $\Qsoftparams$ approximates the soft Q-value up to an additive constant. The resulting policy is still correct, since the Boltzmann distribution in \autoref{eq:boltzmann_policy} is independent of constant shift in the energy function.

\section{Implementation}
\label{app:implementation}

\subsection{Computing the Policy Update}
\label{app:compute_actor_update}

Here we explain in full detail how the policy update direction $\hat\nabla_\pparams\ploss$ in \autoref{alg:soft-q-learning} is computed. We reuse the indices $i,j$ in this section with a different meaning than in the body of the paper for the sake of providing a cleaner presentation.

Expectations appear in amortized SVGD in two places. First, SVGD approximates the optimal descent direction $\phi(\sdots)$ in Equation~\eqref{eq:stein_gradient} with an empirical average over the samples $\ati = f^\pparams(\smpli)$. Similarly, SVGD approximates the expectation in Equation \eqref{eq:actor_gradient} with samples $\attildej = f^\pparams(\smpltildej)$, which can be the same or different from $\ati$. Substituting \eqref{eq:stein_gradient} into \eqref{eq:actor_gradient} and taking the gradient gives the empirical estimate
\begin{equation}
\hat\nabla_\pparams \ploss (\pparams; \st) = \frac{1}{KM}\sum_{j=1}^K\sum_{i=1}^M\bigg( \kernel(\ati, \attildej)\nabla_{\action'} \Qsoft(\st, \action')\big|_{\action' = \ati}\bigg.\notag \bigg.+ \nabla_{\action'}\kernel(\action', \attildej)\big|_{\action'=\ati}\bigg)\nabla_\pparams f^\pparams(\smpltildej;\st).
\label{eq:stein_approx_grad}
\end{equation}
Finally, the update direction $\hat\nabla_\pparams\ploss$ is the average of $\hat\nabla_\pparams \ploss (\pparams; \st)$, where $\st$ is drawn from a mini-batch.


\subsection{Computing the Density of Sampled Actions}
Equation \autoref{eq:soft_value_as_expectation} states that the soft value can be computed by sampling from a distribution $q_{\action'}$ and that $q_{\action'}(\sdots) \propto \exp\left(\frac{1}{\alpha}\Qsoft^\pparams(\state, \sdots)\right)$ is optimal. A direct solution is to obtain actions from the sampling network: $\action' = f^\pparams(\smpl'; \state)$. If the samples $\smpl'$ and actions $\action'$ have the same dimension, and if the jacobian matrix $\frac{\partial \action'}{\partial \smpl'}$ is non-singular, then the probability density is
\begin{equation}
q_{\action'}(\action') = p_{\smpl}(\smpl') \frac{1}{\left| \text{det}\left( \frac{\partial \action'}{\partial \smpl'} \right)\right|}.
\end{equation}
In practice, the Jacobian is usually singular at the beginning of training, when the sampler $f^\pparams$ is not fully trained. A simple solution is to begin with uniform action sampling and then switch to $f^\pparams$ later, which is reasonable, since an untrained sampler is unlikely to produce better samples for estimating the partition function anyway.

\section{Experiments}
\label{app:experiments}

\subsection{Hyperparameters}
\label{app: hyperparameters}
Throughout all experiments, we use the following parameters for both DDPG and soft Q-learning. The Q-values are updated using ADAM with learning rate $0.001$. The DDPG policy and soft Q-learning sampling network use ADAM with a learning rate of $0.0001$. The algorithm uses a replay pool of size one million. Training does not start until the replay pool has at least 10,000 samples. Every mini-batch has size $64$. Each training iteration consists of $10000$ time steps, and both the Q-values and policy / sampling network are trained at every time step. All experiments are run for $500$ epochs, except that the multi-goal task uses $100$ epochs and the fine-tuning tasks are trained for $200$ epochs. Both the Q-value and policy / sampling network are neural networks comprised of two hidden layers, with $200$ hidden units at each layer and ReLU nonlinearity. Both DDPG and soft Q-learning use additional OU Noise \cite{uhlenbeck1930theory, lillicrap2015continuous} to improve exploration. The parameters are $\theta = 0.15$ and $\sigma = 0.3$. In addition, we found that updating the target parameters too frequently can destabilize training. Therefore we freeze target parameters for every $1000$ time steps (except for the swimming snake experiment, which freezes for $5000$ epochs), and then copy the current network parameters to the target networks directly ($\tau = 1$).

Soft Q-learning uses $K=M=32$ action samples (see \aref{app:compute_actor_update}) to compute the policy update, except that the multi-goal experiment uses $K=M=100$. The number of additional action samples to compute the soft value is $K_V = 50$. The kernel $\kernel$ is a radial basis function, written as $\kernel(\action,\action') = \exp(-\frac{1}{h}\|\action-\action'\|_2^2)$, where $h = \frac{d}{2\log(M+1)}$, with $d$ equal to the median of pairwise distance of sampled actions $\ati$. Note that the step size $h$ changes dynamically depending on the state $\state$, as suggested in \cite{liu2016stein}. 

The entropy coefficient $\alpha$ is $10$ for multi-goal environment, and $0.1$ for the swimming snake, maze, hallway (pretraining) and U-shaped maze (pretraining) experiments.

All fine-tuning tasks anneal the entropy coefficient $\alpha$ quickly in order to improve performance, since the goal during fine-tuning is to recover a near-deterministic policy on the fine-tuning task. In particular, $\alpha$ is annealed log-linearly to $0.001$ within $20$ epochs of fine-tuning. Moreover, the samples $\smpl$ are fixed to a set $\{\smpl_i\}_{i=1}^{K_{\smpl}}$ and $K_{\smpl}$ is reduced linearly to $1$ within $20$ epochs.

\subsection{Task description}
All tasks have a horizon of $T = 500$, except the multi-goal task, which uses $T=20$. We add an additional termination condition to the quadrupedal 3D robot to discourage it from flipping over.

\subsection{Additional Results}
\label{app:additional-results}

\begin{figure}[h]
    \centering
     \subfigure[DDPG]
    {\includegraphics[width=0.45\columnwidth]{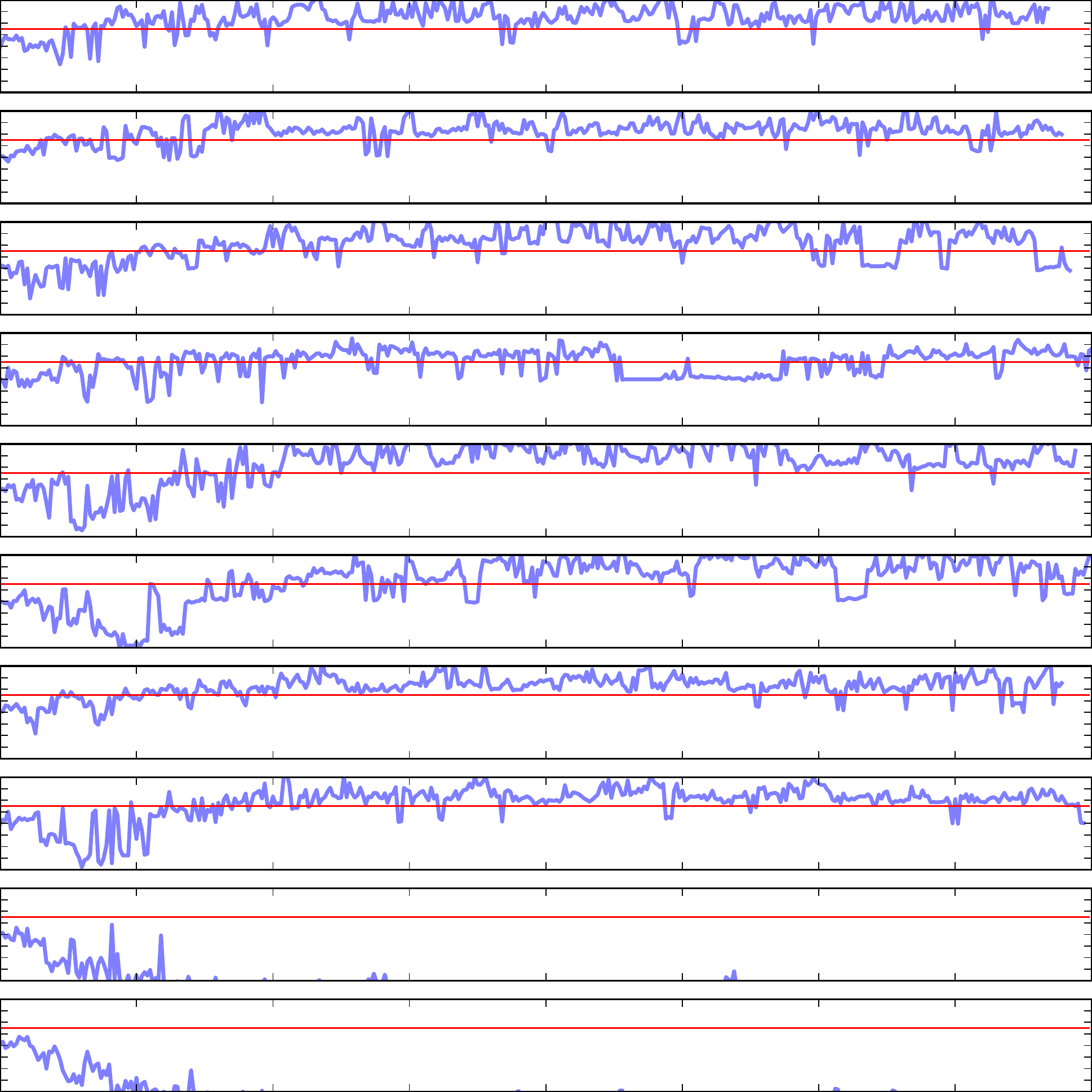}}
    \subfigure[Soft Q-learning]
    {\includegraphics[width=0.45\columnwidth]{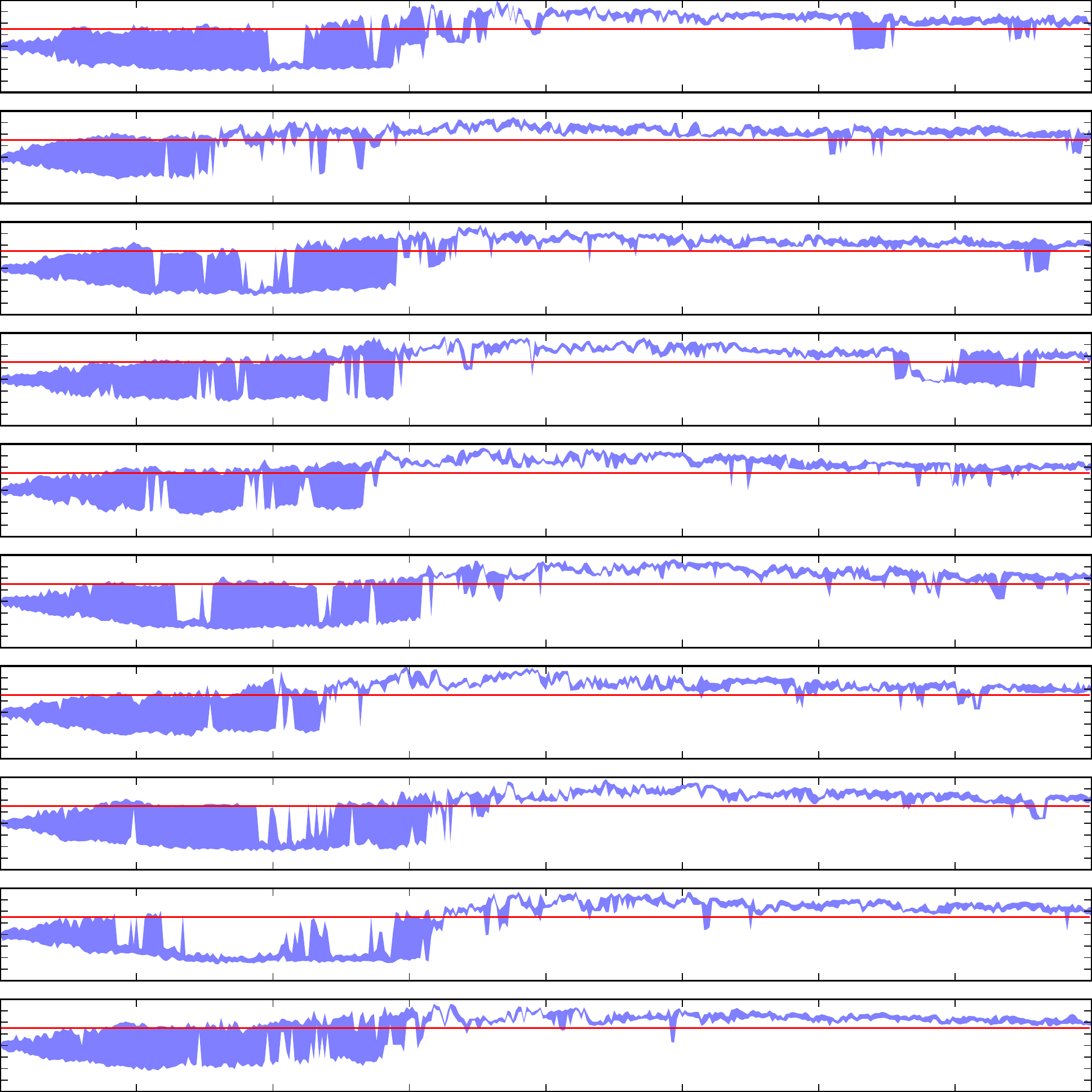}}
    \caption{ 
    Forward swimming distance achieved by each policy. Each row is a policy with a unique random seed. x: training iteration, y: distance (positive: forward, negative: backward). Red line: the ``finish line.'' The blue shaded region is bounded by the maximum and minimum distance (which are equal for DDPG). The plot shows that our method is able to explore equally well in both directions before it commits to the better one.
    \label{fig:swimmer_all_dist_plots}
    }
\end{figure}

\begin{figure}[ht]
    \centering
    \includegraphics[width=0.8\textwidth]{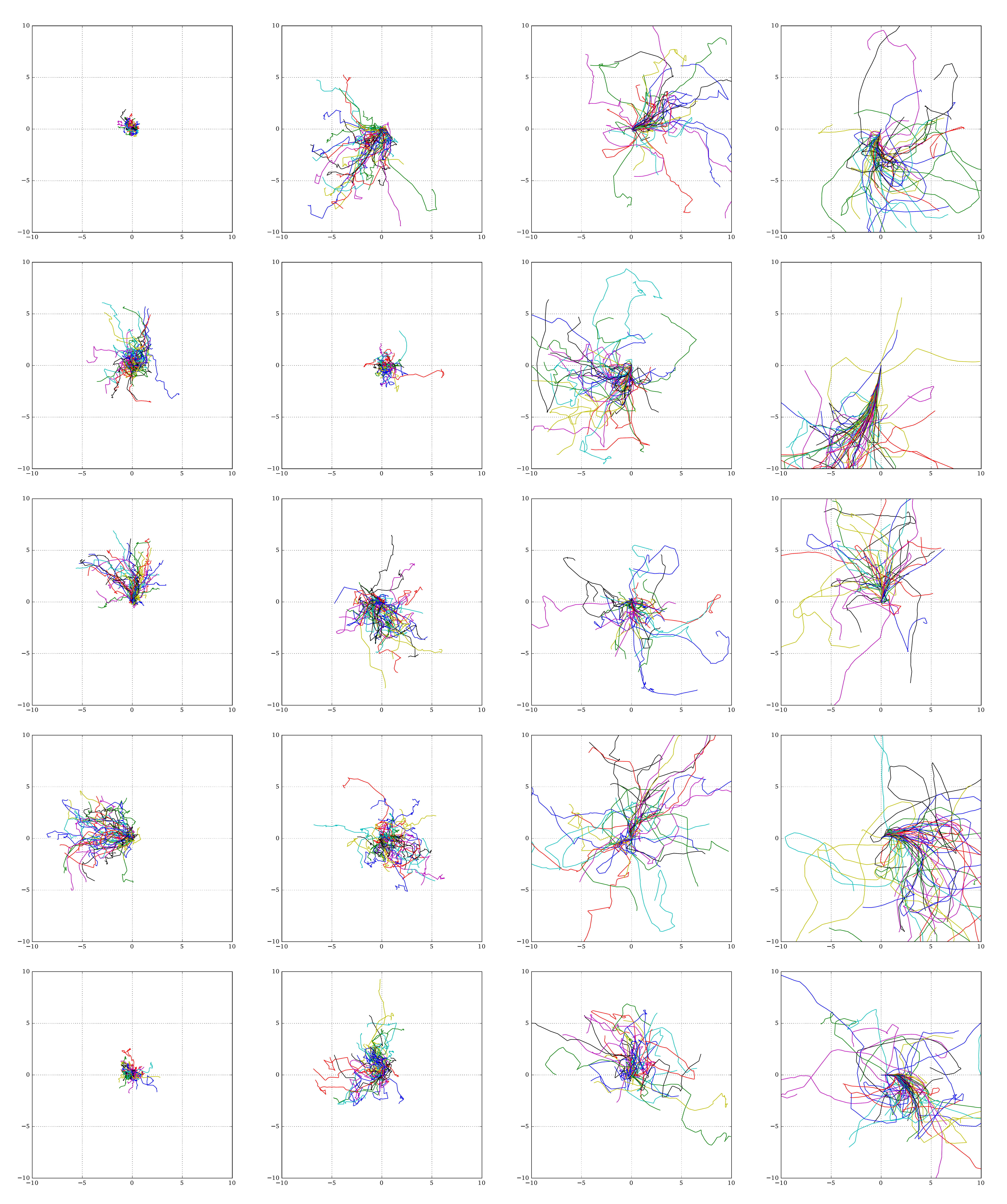}
    \caption{The plot shows trajectories of the quadrupedal robot during maximum entropy pretraining. The robot has diverse behavior and explores multiple directions. The four columns correspond to entropy coefficients $\alpha = 10, 1, 0.1, 0.01$ respectively. Different rows correspond to policies trained with different random seeds. The x and y axes show the x and y coordinates of the center-of-mass.
    As $\alpha$ decreases, the training process focuses more on high rewards, therefore exploring the training ground more extensively. However, low $\alpha$ also tends to produce less diverse behavior. Therefore the trajectories are more concentrated in the fourth column.
    \label{fig:pretrain_trajs}
    }
\end{figure}

\end{document}